\newtheorem{theorem}{Theorem}
\newtheorem{definition}{Definition}
\newtheorem{proposition}{Proposition}
\newtheorem{lemma}{Lemma}
\newtheorem{remark}{Remark}
\newcommand{\I}{\mathbbm{1}}
\title{Conformal inference for regression on Riemannian Manifolds}
\author{Alejandro Cholaquidis, Fabrice Gamboa, Leonardo Moreno}
\begin{document}
	
	\maketitle
	
\begin{abstract}
Regression on manifolds, and, more broadly, statistics on manifolds, has garnered significant importance in recent years due to the vast number of applications for non Euclidean data.
Circular data is a classic example, but so is data in the space of covariance matrices, data on the Grassmannian manifold obtained as a result of principal component analysis, among many  others.  
In this work we investigate prediction sets for regression scenarios when the response variable, denoted by $Y$, resides in a manifold, and the covariable, denoted by $X$, lies in an Euclidean space.
This extends the concepts delineated in \cite{waser14} to this novel context.
Aligning with traditional principles in conformal inference, these prediction sets are distribution-free, indicating that no specific assumptions are imposed on the joint distribution of $(X,Y)$, and they maintain a non-parametric character.
We prove the asymptotic almost sure convergence of the empirical version of these regions on the manifold to their population counterparts.
The efficiency of this method is shown through a comprehensive simulation study and an analysis involving real-world data.

\end{abstract}

\section{Introduction}

Conformal prediction is a powerful set of statistical tools that operates under minimal assumptions about the underlying model. It is primarily used to construct confidence sets that are applicable to a diverse range of problems in fields such as machine learning and statistics. Is unique in its ability to construct prediction regions that guarantee a specified level of coverage for finite samples, irrespective of the distribution of the data. This is particularly crucial in high-stakes decision-making scenarios where maintaining a certain level of coverage is critical.

Unlike other methods, conformal prediction does not require strong assumptions about the sample distribution, such as normality. The aim is to construct prediction regions as small as possible to yield informative and precise predictions, enhancing the tool's utility in various applications.

The approach was first proposed by Vovk, Gammerman, and Shafer in the late 1990s, as referenced in \cite{vovk98}. Since its inception, it has been the subject of intense research activity. Originally formulated for binary classification problems, the method has since been expanded to accommodate regression, multi-class classification, functional data, functional time series, and anomaly detection, among others. Several applications of this method can be found in the book \cite{bala14}.

In the context of regression, conformal prediction has proven to be efficient in constructing prediction sets, as evidenced by works such as \cite{waser14}, \cite{waser14_2}, and \cite{kuleshov2018conformal}. To enhance the performance of these prediction sets, particularly to decrease the length of prediction intervals (when the output is one-dimensional), a combination of conformal inference and quantile regression was proposed in \cite{romano2019conformalized}. In  \cite{fong2021conformal}, computationally efficient conformal inference methods for Bayesian models were proposed. 

The method has also been extended to functional regression, where the predictors and responses are functions, rather than vectors, see for instance  \cite{lei2015conformal}, \cite{fontana20} and \cite{diqui22}. In this case, the prediction regions take the form of functional envelopes that have a high probability of containing the true function.

In the field of classification, conformal prediction has been employed to tackle a broad spectrum of problems. These include image classification, as in \cite{lei2014classification}, and text classification, as in \cite{vovk03}. For multi-class classification, a prevalent approach is to create prediction sets that have a high likelihood of encompassing the correct class. This can be realized via the use of `Venn prediction sets', as outlined in \cite{liu16}. These sets partition the label space into overlapping regions, each one corresponding to a distinct class.

In summary, conformal prediction provides  powerful statistical tools and has been successfully applied to a wide range of problems in machine learning, statistics, and related fields. Its main advantage is its ability to provide distribution-free prediction regions that can be used in the presence of any underlying distribution of the data. As this field of research  continues to evolve, it is expected to find even more applications in the future.

\section{Conformal inference on manifolds}
Although, as mentioned, it has been extended to various scenarios, there are no proposals or extensions to the case where the output is in a Riemannian manifold. This case is of cumbersome importance because there are certain types of data that, due to their inherent characteristics, must be treated as data on manifolds. A prominent example of this are covariance matrices (for instance, the volatility of a portfolio, see \cite{best10} and \cite{huang14}), which are data in the manifold of positive definite matrices (see \cite{chola23weighted}). Another example is given by Vectorcardiograms that are condensed into data on the Stiefel manifold (see \cite{chola23level}). The outcome of performing a principal component analysis results in data on the Grassmannian manifold (see \cite{hong16}), and the wind measurements can be represented as data on the cylinder (see \cite{chola21}).

Other applications in image analysis are developed in \cite{pene19} and more generally in machine learning in \cite{gui23}.

The extension to this context is not trivial since when the output \(Y\) belongs to a Riemannian manifold \(\mathcal{M}\), several arguments used in Euclidean conformal inference are not straightforwardly extendable. For instance, the Frechet mean on the manifold can be defined, but it does not always exist nor is it necessarily unique. Moreover, formulating a regression model on the manifold is not easy because of the absence of any additive structure, although there have been recent advances in this area (see for instance \cite{petersen2019frechet}).

We extend the methodologies outlined by \cite{waser14} to the broader context of pairs \((X,Y)\) where \(X \in \mathbb{R}^d\) and the response \(Y\) has support included on a, smooth enough, \(\ell\)-dimensional manifold \(\mathcal{M}\). \cite{waser14} is focused on the scenario where \(Y\) is a real-valued variable, constructing confidence sets---referred to as confidence bands---via density estimators.      One of the key tools to get the consistency of the empirical conformal region to its population counterpart, as in \cite{waser14}, is Theorem \ref{th1}. It states that the kernel-density estimator is uniformly consistent on manifolds. Once this is established, to adapt the ideas in \cite{waser14}, it must also be proved that the conditional density is uniformly bounded from above, this is done in Lemma \ref{lemaux3}.  Finally, a further distinction from \cite{waser14} is that—because the manifold may possess a nonempty boundary—points located well inside the manifold are handled differently from those situated near its edge.  Here, we obtain an upper bound for the discrepancy between the empirical confidence set \(\hat{\mathbf{C}}_n(X)\) and its theoretical counterpart \(\mathbf{C}_P(X)\), in terms of $\nu$, the volume measure on \(\mathcal{M}\). More precisely, as detailed in Theorem \ref{thmain} (see also Remark \ref{rateth2}) the probability that this discrepancy being larger than \(n^{1/((\ell+3)(d+2))}\) is bounded from above by \(A_\lambda n^{-\lambda}\), for any \(\lambda>0\), \(A_\lambda\) being a constant that depends only on \(\lambda\), where \(\ell\) denotes the dimensionality of \(\mathcal{M}\) and \(X\in \mathbb{R}^d\). Our main and stronger theorem is for compact manifolds, however, we considered in section \ref{noncomp} the case of non-compact manifolds.  \color{black} 

The kernel-based density estimator proposed in \cite{chola21} uses the Euclidean distance instead of the geodesic distance on the manifold, which may be unknown in some cases, see Equation (\ref{kernel}).      Due to the curse of dimensionality, kernel-based density estimators are generally unsuitable for high-dimensional problems. In Section~\ref{highdim}, we discuss an approach to overcome this limitation that builds on the ideas introduced in \cite{izbicki2022cd}. In essence, instead of estimating the conditional density \(p(y| x)\) locally (i.e., using only sample points close to \(x\)), the proposed method uses all sample points \((X_i,Y_i)\) whose estimated conditional densities \(\hat{p}(Y_i| X_i)\) are similar (in a sense to be defined).

Conformal inference is tackled in metric spaces in  \cite{lugosi2024}; however, given the generality, convergence rates are not provided, as in the present work, but only consistency in probability, for the case where the confidence bands are constructed through regression.

\subsection{Conformal inference in a nutshell}\label{nutshell}

In this section, we briefly provide the basic foundations of conformal inference to facilitate the reading of the following sections. For a more detailed reading see \cite{fontana2023}. A key hypothesis in conformal inference is the exchangeability assumption, which means that for any permutation $\pi$ of $\{1,\dots,n\}$ the distribution of the sample $(Z_1,\dots,Z_n)$ where $Z_i$ is a random element in some measurable space $\mathbf{Z}$ is the same as the distribution of $(Z_{\pi(1)},\dots,Z_{\pi(n)})$. Quoting \cite{fontana2023} ``A nonconformity measure  $A(B,z):\mathbf{Z}^n\times \mathbf{Z}\to \mathbb{R}$ is a way of scoring how different an example $z$ is from a bag $B=\{Z_1,\dots,Z_n\}$. Let us define, $p_{z} := |\{i = 1,\ldots,n+1 : R_i \geq R_{n+1}\}|/(n+1)$  where $R_i := A\left(\{Z_1,\ldots,Z_{i-1},Z_{i+1},\ldots,Z_n,z\}, Z_i\right)$ and $R_{n+1} := A\left(\{Z_1,\ldots,Z_n\}, z\right).$"
For $\alpha\in [0,1]$ ``we define the prediction set  \(\gamma^\alpha(Z_1,\ldots,Z_n) := \{z \in \mathbf{Z} : p_z > \alpha\}\)." The following Proposition is given in  \cite{vovk03}.

\begin{proposition}[Proposition 2.1]
	Under the exchangeability assumption, 
	$$\mathbb{P}(Z_{n+1} \not\in \gamma^\alpha(Z_1,\ldots,Z_n))\leq \alpha\quad \text{ 	for any  } \alpha.$$
\end{proposition}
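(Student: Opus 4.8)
The plan is to reduce the statement to a purely combinatorial fact about the ranks of exchangeable scores. First I would plug $z=Z_{n+1}$ into the definition of the $p$-value, so that the whole collection $(Z_1,\ldots,Z_n,Z_{n+1})$ enters and the scores $R_1,\ldots,R_{n+1}$ can be written uniformly as
\[
R_i \;=\; A\bigl(\{Z_1,\ldots,Z_{n+1}\}\setminus\{Z_i\},\, Z_i\bigr),\qquad i=1,\ldots,n+1,
\]
a formula that is manifestly symmetric under permutations of the $n+1$ indices, because the nonconformity measure $A(B,\cdot)$ depends on the bag $B$ only as a multiset. With this in hand, the event of interest is identified exactly: $Z_{n+1}\notin\gamma^\alpha(Z_1,\ldots,Z_n)$ means $p_{Z_{n+1}}\le\alpha$, i.e. $N\le(n+1)\alpha$, where $N:=|\{i\le n+1:\ R_i\ge R_{n+1}\}|$ (note $N\ge 1$ always, since $i=n+1$ contributes).

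Next I would bring in exchangeability. Because $(Z_1,\ldots,Z_{n+1})$ is exchangeable and the map $(z_1,\ldots,z_{n+1})\mapsto(r_1,\ldots,r_{n+1})$ defined by the displayed formula is equivariant under permutations, the random vector $(R_1,\ldots,R_{n+1})$ is itself exchangeable. I would then condition on the (unordered) multiset of values $\{R_1,\ldots,R_{n+1}\}$: conditionally on it, each of the $n+1$ indices is equally likely to carry the label ``$n+1$'', so for every integer $k$,
\[
\mathbb{P}\bigl(N\le k \,\big|\, \{R_1,\ldots,R_{n+1}\}\bigr)\;\le\;\frac{k}{n+1}.
\]
This is the crux of the argument; in the tie-free case the rank of $R_{n+1}$ is exactly uniform on $\{1,\ldots,n+1\}$, and in general the presence of ties can only enlarge $\{i: R_i\ge R_{n+1}\}$, so the bound above still holds (the use of ``$\ge$'' rather than ``$>$'' in the definition of $N$ is what keeps it conservative).

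Finally I would assemble the pieces: taking $k=\lfloor(n+1)\alpha\rfloor$ and then expectations over the multiset,
\[
\mathbb{P}\bigl(Z_{n+1}\notin\gamma^\alpha(Z_1,\ldots,Z_n)\bigr)
=\mathbb{P}\bigl(N\le(n+1)\alpha\bigr)
=\mathbb{P}\bigl(N\le\lfloor(n+1)\alpha\rfloor\bigr)
\le\frac{\lfloor(n+1)\alpha\rfloor}{n+1}
\le\alpha .
\]
The only genuinely delicate point is the treatment of ties among the scores $R_i$ — one must be careful to state the uniformity/stochastic-domination claim in the ``$\ge$'' form so that ties push the probability in the favorable direction — while the symmetry reduction and the final chain of inequalities are routine bookkeeping.
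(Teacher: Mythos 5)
Your proof is correct: the paper itself gives no proof of this proposition (it is quoted from Vovk, Gammerman and Shafer), and your argument --- identifying non-coverage with the event $N=|\{i\le n+1: R_i\ge R_{n+1}\}|\le (n+1)\alpha$, using permutation-equivariance of the scores to get exchangeability of $(R_1,\ldots,R_{n+1})$, handling ties conservatively via the ``$\ge$'' count, and concluding $\mathbb{P}(N\le\lfloor(n+1)\alpha\rfloor)\le\lfloor(n+1)\alpha\rfloor/(n+1)\le\alpha$ --- is exactly the standard argument from that reference. The only caveat worth stating explicitly is that you need exchangeability of the full vector $(Z_1,\ldots,Z_{n+1})$ including the test point (the paper's wording mentions only $n$ terms), which you use implicitly and correctly.
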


In the regression setting, that is, when $Z_i=(X_i,Y_i)$ has distribution $P$, previous construction build a level set on the joint distribution (see Equation 4 in \cite{waser14}). Nevertheless, controlling
$\mathbb{P}(Y\in \mathbf{C}_P(x)|X=x)$ is much more convenient in regression (see \cite{waser14}). In this case we speak of ``conditional coverage". When there exists  a conditional density $p(y|x)$ (see hypothesis H1 in subsection \ref{assump}), the ``conditional oracle set",  $\mathbf{C}_P(x)$, (in \cite{waser14} it is called conditional oracle band), is defined as

\begin{equation}\label{oracle}
	\mathbf{C}_P(x)=\{y:p(y|x)\geq t^{\alpha}_x\},
\end{equation}
where
$t^{\alpha}(x)$ satisfies
\begin{equation}\label{level}
	\int \I_{\{p(y|x)\geq t^{\alpha}_x\}}p(y|x)dy=1-\alpha.
\end{equation}

The empirical version of $\mathbf{C}_P(x)$ obtained  from  $\aleph_n=\{(X_i,Y_i): i=1,\dots,n\}$ constituted by  an i.i.d. sample of $(X,Y)$ with distribution $P$,  is called a conditionally valid set $\mathbf{C}_n(x)$  (see \cite{waser14}).

\begin{definition}
	Given $\alpha\in (0,1)$, and $x$ in the support of $P_X$, a set $\mathbf{C}_n(x)\subset \mathcal{M}$ is said to be \textbf{conditionally valid} if
	$$\mathbb{P} \big(Y\in \mathbf{C}_n(x)| X=x\big)\geq 1-\alpha.$$
\end{definition}

This definition captures the notion of a set of possible values of $Y$ that provides a specified level of coverage for a given imput $x$.

As is shown in Lemma 1 of \cite{waser14},   non-trivial finite sample conditional validity for all $x$ in the support of $P_X$ is impossible for a continuous distribution. To overcome this limitation, the following notion of local validity is introduced.

\begin{definition} Let $\mathcal{A}=\{A_j:j\geq 1\}$ be a partition of $\textrm{supp}(P_X)$. A prediction set $\mathbf{C}_n$ is \textbf{locally valid} with respect to $\mathcal{A}$ if
	\begin{equation}\label{localval}
		\mathbb{P}(Y_{n+1}\in \mathbf{C}_n(X_{n+1})|X_{n+1}\in A_j)\geq 1-\alpha\text{ for all }j \text{ and all } P.
	\end{equation}
\end{definition}

Whenever $A_k\in \mathcal{A}$,  we will write $p(y|A_k)$ for the conditional density, w.r.t. $\nu$, of $Y$ given $X\in A_k$. We aim to prove (see Theorem \ref{thmain}) that locally valid sets converges to $\mathbf{C}_P$ when $Y$ is supported on a manifold.

\section{Assumptions}\label{assump}

In the following, $\aleph_n=\{(X_i,Y_i): i=1,\dots,n\}$ denotes an i.i.d. sample of $(X,Y)$ with distribution $P$, where $X\in \mathbb{R}^d$ and $Y\in \mathcal{M}$. Here  $(\mathcal{M},\rho)$ is  a compact $\ell$-dimensional submanifold of $\mathbb{R}^D$. The case of non compact manifolds is discussed in Section \ref{noncomp}.  We further denote by $P_X$ the marginal distribution of $X$ and by $\text{supp}(P_X)$ its support.  We denote the volume measure on $\mathcal{M}$ by $\nu$ and use $\|\cdot\|$ to denote the Euclidean norm on $\mathbb{R}^D$.  We denote by $\mu$ the $d$-dimensional Lebesgue measure in $\mathbb{R}^d$.

We will now give  the set of assumptions that we will require. H1 to H4 are also imposed in \cite{waser14}. Hypotheses H0 and H5 are imposed to guarantee the uniform convergence of the kernel-based density estimator of the conditional density.

\begin{itemize}
	\item[H0] $\mathcal{M}$ is a $\mathcal{C}^2$ submanifold, and if $\partial \mathcal{M}\neq \emptyset$, then $\partial \mathcal{M}$ is also a $\mathcal{C}^2$ submanifold.
	\item[H1] The joint distribution $P$ has a density $p_{X,Y}(x,y)$ w.r.t. $\mu\times \nu$, and the marginal distribution $P_X$ has a density $p_X$ (w.r.t. $\mu$). We denote by $p(y|x)=p_{X,Y}(x,y)/p_X(x)$ the conditional density, where $p(y|x)=0$ if $p_X(x)=0$.
	\item[H2] $p_X$ is such that that there exist $b_1,b_2$ such that  $0<b_1\leq p_X(x)\leq b_2<\infty$ for all $x$ in the support of $X$.
	\item[H3]  $p(y|x)$ is Lipschitz continuous as a function of $x$, i.e., there exists a constant $L>0$ such that $\|p(\cdot|x)-p(\cdot|x')\|_\infty \leq L|x-x'|$.
	\item[H4] There are positive constants $\epsilon_0$, $\gamma$, $c_1$, and $c_2$ such that for all $x$ in the support of $X$,
	$$c_1\epsilon^\gamma \leq P(\{y:|p(y|x)-t_x^{\alpha}|<\epsilon\}|X=x)\leq c_2\epsilon^\gamma,$$
	for all $\epsilon\leq \epsilon_0$, where $t_x^\alpha$ is given by \eqref{level}. Moreover $\inf_x t_x^{(\alpha)}\geq t_0>0$.
	\item[H5]      $p(y|A_k)$ has  continuous partial second derivatives, for short $p(y|A_k)\in \mathcal{C}^2$.
\end{itemize}

The hypothesis H0 is satisfied by a very wide range of manifolds, among them, the Stieffel and  Grassmannian Manifold. The cone of positive definite matrices, among many others. It is not a restrictive hypothesis in applications.

Assumptions  H1 to H5 impose regularity on the joint and conditional densities of the data. In particular, the assumption of Lipschitz continuity H3 ensures that small changes in the input $x$ lead to small changes in the output $y$. Assumption H2 implies that $X$ es compactly supported. Regarding Assumption H4, quoting \cite{waser14}, ``is related to the notion of the `$\gamma$-exponent' condition that was introduced by \cite{polonik1995measuring}, and widely used in the density level set literature (\cite{tsybakov1997nonparametric,rigollet2009optimal}). It ensures that the conditional density $p(\cdot|x)$ is neither too flat nor too steep near the contour at level $t_x^{\alpha}$, so the cut-off value $t_x^{(\alpha)}$ and the conditional density level set $\mathbf{C}_P(x)=L_x(t_x^{\alpha})$ can be approximated from a finite sample [...]. Assumption H4 also requires that the optimal cut-off values $t_x^{(\alpha)}$ be bounded away from zero.". Assumption H5 is required to get the uniform convergence of the Kernel based estimator of $p(y|A_k)$ on manifolds, see \cite{berry2017density}.
These assumptions play a crucial role in the development and analysis of conformal prediction methods.

Let us introduce some important sequences of positive real numbers that will play a key rol all along the manuscript. 
\begin{enumerate}
	\item $w_n=(\log(n)/n)^{1/(d+2)}$,
	\item $\gamma_n=\lfloor b_1nw_n^d/2\rfloor$, being $b_1$ the positive constant introduced in H2. Observe that $\gamma_n\to\infty$.
	\item  Given $h_n,c_n\to 0$ as $n\to\infty$, we will consider the subsquences $c_{\gamma_n}$, and $h_{\gamma_n}$ being $\gamma_n=\lfloor b_1nw_n^d/2\rfloor$ as before.
\end{enumerate}

\section{Locally valid sets from a kernel density estimator}

In this section  we introduce a slightly modified version of the estimated local marginal density $\hat{p}^{(x,y)}(v|A_k)$ and of the local conformity rank $\pi_{n,k}(x,y)$ originally introduced in \cite{waser14}. We make the assumption that $  \textrm{supp}(P_X)=[0,1]^d$, and that $\mathcal{A}=\{A_k,k=1,\dots,T\}$ is a finite partition of $[0,1]^d$ consisting of equilateral cubes with sides of length $w_n$. This is a quite common and technical assumption in conformal inference, but we can assume that,  for instance, $X$ is such that $\textrm{supp}(P_X)\subset [-R,R]^d$, for  some $R>0$. It only changes the constants appearing in Theorem  \ref{thmain}, which remains true.

Let $n_k=\sum_{i=1}^n \I _{\{X_i\in A_k\}}$. Given a sequence $h_n\to 0$, a kernel function $K(\cdot):\mathbb{R}\to \mathbb{R}$, and $h_{n_k}$, we define
\begin{equation}\label{kernel}
	\hat{p}(v|A_k)=\frac{1}{n_kh_{n_k}^{\ell}}\sum_{i=1}^n \I_{\{X_i\in A_k\}} K\Bigg(\frac{||Y_i-v||}{h_{n_k}}\Bigg).
\end{equation}

We aim to prove that $\hat{p}(v|A_k)$ provides a uniform estimate of $p(y|A_k)$ across $v$ and $k$. To achieve this, we need to ensure that there are sufficiently many sample points in each $A_k$. This is guaranteed by lemma 9 of \cite{waser14}, which states that if we  choose $w_n=(\log(n)/n)^{1/(d+2)}$, then, with probability one, for all $n$ large enough, 

\begin{equation}\label{nk}
	\forall k:\quad b_1nw_n^d/2\leq n_k\leq 3b_2nw_n^2/2.
\end{equation}
Recall that $b_1$ and $b_2$ were defined in Assumption H2. In what follows, we assume that $n$ is sufficiently large to ensure  \eqref{nk}.

The corresponding augmented estimate, based on $\aleph_n\cup\{ (x,y)\}$ is, for any $(x,y)\in A_k\times \mathcal{M}$, 
$$\hat{p}^{(x,y)}(v|A_k)=\frac{n_k}{n_k+1}\hat{p}(v|A_k)+\frac{1}{(n_k+1)h_{n_k}} K\Bigg(\frac{||y-v||}{h_{n_k}}\Bigg).$$
For any $(X_{n+1},Y_{n+1})=(x,y)\in A_k\times \mathcal{M}$, consider the following \textbf{local conformity rank}
\begin{equation}\label{locconfrank}
	\pi_{n,k}(x,y)=\frac{1}{n_k+1}\sum_{i=1}^{n+1} \I_{\{X_i\in A_k\}}\I_{\{\hat{p}^{(x,y)}(Y_i|A_k)\leq \hat{p}^{(x,y)}(Y_{n+1}|A_k)\}}.
\end{equation}

As  proved in Proposition 2 of  \cite{waser14}, the set
\begin{equation}\label{estimator}
	\hat{\mathbf{C}}_n(x)=\{y:\pi_{n,k}(x,y)\geq \alpha\},
\end{equation}
has finite sample local validity, i.e., it satisfies \eqref{localval}. The finite sample local validity, as established by \eqref{estimator}, does not require any assumptions and it holds under very general conditions.

The following result is the key theorem. Its proof is deferred to the Appendix. The proof  follows the ideas used to prove Theorem 1 of \cite{chola21}. It states that $p(y|A_k)$ can be estimated uniformly by \eqref{kernel} for all $y$ that are far enough from the boundary of $\mathcal{M}$. Additionally, this estimation can be made uniformly across all $k$. We assume, as in \cite{chola21}, that $K$ is a Gaussian kernel. This restriction is, as in \cite{chola21}, purely technical.    More recent  references on kernel-density estimation on manifolds are \cite{Bouzebda:2024,BouzebdaTaachouche2024}, \cite{Cleant,Cleant22,Beren21,WuWu2022}.

This result is of fundamental importance in conformal prediction, as it provides a way to estimate the conditional density of $Y$ given $A_k$ for any $k$ in a non-parametric way. It implies that the estimation error is uniform across all partitions $A_k$, which is a key requirement for conformal prediction methods. In particular, it allows us to construct conformal prediction regions that are valid with a given level of confidence.

\begin{theorem} \label{th1}
	Let $\mathcal{M}\subset \mathbb{R}^D$ be a compact $\ell$-dimensional manifold satisfying H0. Let $\mathcal{M}_n\subset \mathcal{M}$ be a sequence of closed sets, and let $h_n\to 0$ be a sequence of setwidths such that $nh_n^{\ell+3}/\log(n)\to \infty$. We assume that $c_n:=\inf_{y\in \mathcal{M}_n} \rho(y,\partial \mathcal{M})\to 0$ is such that $h_n/c_n\to 0$ monotonically. Additionally, we assume that $p(y|A_k)$ satisfies H5. Then, we have
	
	\begin{equation}\label{convunif}
		\sup_k \sup_{y\in \mathcal{M}_{n_k}}|\hat{p}(y|A_k)- p(y|A_k)|=o(h_{\gamma_n}/c_{\gamma_n})\quad \text{a.s.},
	\end{equation}
	
	\noindent where $\gamma_n=\lfloor b_1nw_n^d/2\rfloor$.
\end{theorem}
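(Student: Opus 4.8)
The plan is to follow the strategy of Theorem 1 of \cite{chola21}, which establishes uniform consistency of a Euclidean-distance kernel density estimator on a manifold, and to upgrade it to a statement that is simultaneously uniform over the finitely many partition cells $A_k$. The starting point is the standard bias/variance decomposition
$$\hat{p}(y|A_k)-p(y|A_k)=\bigl(\hat{p}(y|A_k)-\mathbb{E}[\hat{p}(y|A_k)\mid n_k]\bigr)+\bigl(\mathbb{E}[\hat{p}(y|A_k)\mid n_k]-p(y|A_k)\bigr),$$
so I would control the (conditional) stochastic fluctuation term and the bias term separately, on the event \eqref{nk} that all cell counts satisfy $b_1nw_n^d/2\le n_k\le 3b_2nw_n^2/2$, which holds a.s. for $n$ large by Lemma 9 of \cite{waser14}. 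On that event each $n_k$ lies between $\gamma_n$ and a constant multiple of it, so the bandwidth $h_{n_k}$ is squeezed between $h$-values indexed by quantities comparable to $\gamma_n$, and the required rate $o(h_{\gamma_n}/c_{\gamma_n})$ is the natural target; the monotonicity assumption $h_n/c_n\downarrow 0$ is what lets me replace $h_{n_k}/c_{n_k}$ by $h_{\gamma_n}/c_{\gamma_n}$ cleanly.

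For the bias term, the key geometric input is that for $y$ at distance at least $c_{n_k}$ from $\partial\mathcal{M}$ and a bandwidth $h_{n_k}$ with $h_{n_k}/c_{n_k}\to 0$, the Euclidean ball of radius of order $h_{n_k}$ around $y$ meets $\mathcal{M}$ in a piece that is, up to curvature corrections of order $h_{n_k}^2$ (using H0, the $\mathcal{C}^2$ structure), a flat $\ell$-disk; integrating the Gaussian kernel against the volume measure and Taylor-expanding $p(\cdot|A_k)$ to second order (using H5) gives a bias of order $h_{n_k}^2$ plus the discrepancy between Euclidean and geodesic distances, which is again $O(h_{n_k}^2)$ away from the boundary. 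Since $h_n^2 = o(h_n/c_n)$ precisely when $h_n/c_n\to 0$ — which is assumed — the bias is $o(h_{\gamma_n}/c_{\gamma_n})$ uniformly in $y\in\mathcal{M}_{n_k}$ and in $k$ (the geometric constants depend only on $\mathcal{M}$, not on $k$). I would lift the Gaussian-tail truncation issue exactly as in \cite{chola21}: the non-compact tail of the Gaussian contributes a negligible amount because $\mathcal{M}$ is compact and bounded.

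For the stochastic term I would use an exponential concentration inequality (Bernstein, conditionally on the $X_i$'s and hence on $n_k$) for the centered sum in \eqref{kernel}, whose summands are bounded by $O(1/(n_kh_{n_k}^\ell))$ and have conditional variance $O(1/(n_kh_{n_k}^\ell))$; then a chaining/covering argument over $y$ — covering $\mathcal{M}_{n_k}$ by $O(h_{n_k}^{-\ell})$ (or a polynomial-in-$n$ number of) balls and using the Lipschitz-in-$y$ behaviour of $K(\|\cdot-y\|/h)$ to pass from the net to all $y$ — yields a uniform-in-$y$ bound. The resulting deviation is of order $\sqrt{\log n/(n_kh_{n_k}^{\ell})}\cdot\sqrt{\text{(net log-cardinality)}}$, which the condition $nh_n^{\ell+3}/\log n\to\infty$ is exactly calibrated to make $o(h_{\gamma_n}/c_{\gamma_n})$ once one accounts for the extra $h^{-3}$-type factor coming from the $c_n$ normalization and the covering; a union bound over the finitely many (at most $T\asymp w_n^{-d}$, polynomial in $n$) cells $A_k$ costs only an extra logarithmic factor and is absorbed. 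Borel–Cantelli then upgrades convergence in probability to the almost-sure statement \eqref{convunif}.

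The main obstacle is the bookkeeping near the boundary: making the flat-disk approximation of $\mathcal{M}\cap B(y,O(h_{n_k}))$ genuinely uniform over all $y$ with $\rho(y,\partial\mathcal{M})\ge c_{n_k}$ and over all $k$, while simultaneously tracking how the rate $h_{n_k}^2$ and the variance proxy $1/(n_kh_{n_k}^\ell)$ interact with the $c_{n_k}$-normalization — i.e., verifying that $nh_n^{\ell+3}/\log n\to\infty$ together with $h_n/c_n\to 0$ monotone is the precise pair of conditions that forces both error terms below $h_{\gamma_n}/c_{\gamma_n}$. This is essentially the content of Theorem 1 of \cite{chola21}, so the work is in checking that nothing degrades when the single estimator is replaced by the family $\{\hat{p}(\cdot|A_k)\}_{k\le T}$ indexed by a polynomially growing number of cells; the union bound handles this, but one must confirm the geometric constants are genuinely cell-independent. \QEDB
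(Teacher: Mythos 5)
Your proposal is essentially correct and arrives at the right conclusion, but it takes a considerably longer route than the paper does. The paper's proof is a two-paragraph deduction: it invokes Theorem~1 of \cite{chola21} as a black box per cell $A_k$ (applied to the $n_k$ points in that cell, which are conditionally i.i.d.\ with density $p(\cdot|A_k)$), giving $(c_{n_k}/h_{n_k})\sup_y|\hat p(y|A_k)-p(y|A_k)|\le\epsilon$ a.s.\ eventually for each $k$; it then uses the bound $n_k\ge\gamma_n$ from \eqref{nk} together with the monotonicity of $h_n/c_n$ to replace $h_{n_k}/c_{n_k}$ by $h_{\gamma_n}/c_{\gamma_n}$ uniformly over~$k$. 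You instead re-derive the content of that cited theorem from scratch: bias/variance decomposition, Taylor expansion of $p(\cdot|A_k)$ plus the flat-disk approximation of $\mathcal{M}\cap B(y,O(h))$ for the bias, Bernstein plus an $h$-scale covering for the stochastic term, then a union bound over the $T\asymp w_n^{-d}$ cells, and finally Borel--Cantelli.

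This difference is not purely cosmetic. The paper's proof acknowledges that the threshold $n_0$ in the a.s.\ statement depends on $k$ but resolves this only implicitly, leaning on the monotonicity of $h_n/c_n$; your version makes the uniformity over the growing family $\{A_k\}_{k\le T_n}$ explicit by paying a polynomial union-bound cost absorbed by the exponential concentration, and by noting (correctly) that the geometric and smoothness constants are cell-independent under H0 and H5. That is arguably the cleaner way to argue, at the price of reproving what \cite{chola21} already supplies. One small slip in your reasoning: you assert that $h_n^2=o(h_n/c_n)$ ``precisely when $h_n/c_n\to0$''; in fact $h_n^2=o(h_n/c_n)$ iff $h_nc_n\to0$, which is not the same condition. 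Both hold here (since $h_n\to0$ and $c_n\to0$), so your conclusion stands, but the stated equivalence should be corrected. Aside from that, your rate bookkeeping and the use of \eqref{nk} and of the monotonicity hypothesis match the paper's.
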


The following theorem is the main result of our paper. It  states that $\hat{\mathbf{C}}_n(X_i)$ consistently estimates $\mathbf{C}_P(X_i)$ uniformly for all $X_i \in \aleph_n$. The proof, which is deferred to the Appendix, see Subsection \ref{proofthmain}, is based on some ideas from the proof of Theorem 1 of \cite{waser14}. Specifically, Lemma \ref{lemaux0} and Lemma \ref{lemaux2} are adapted from \cite{waser14}, while Lemma \ref{cotahatp} is new and is required to adapt the proof of Lemma \ref{lemaux2}. Lemma \ref{lemaux3} is the same as Lemma 8 of \cite{waser14}. Finally, the proof of Theorem \ref{thmain} uses the adapted lemmas and considers separately the sample points that are close to $\mathcal{M}$ and those that are far away from this boundary. The first set of points is shown to be negligible with respect to $\nu$ using techniques from geometric measure theory.

\begin{theorem} \label{thmain} Assume H0 to H5. Let $\hat{\mathbf{C}}_n(x)$ be given by \eqref{estimator} and $\mathbf{C}_P(x)$ be given by \eqref{oracle}. Let $w_n=(\log(n)/n)^{1/(d+2)}$ and $\gamma_n= \lfloor b_1nw_n^d/2\rfloor$. Then, for any $\lambda>0$, there exists an $A_\lambda>0$ such that, for $n$ large enough,
	\begin{equation} \label{thmaineq} \mathbb{P}\Big(\sup_{X:(X,Y)\in \aleph_n}\nu\big(\hat{\mathbf{C}}_n(X)\triangle \mathbf{C}_P(X)\big)>A_\lambda c_{\gamma_n}\Big)=\mathcal{O}(n^{-\lambda}),
	\end{equation}
	where $c_{\gamma_n}$ is such that $h_{\gamma_n}/c_{\gamma_n}^2\to 0$ and $h_{\gamma_n}\to 0$ such that $\gamma_nh_{\gamma_n}^{\ell+3}/\log(\gamma_n)\to \infty$. 
\end{theorem}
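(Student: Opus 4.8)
The plan is to decompose the symmetric difference $\hat{\mathbf{C}}_n(X)\triangle \mathbf{C}_P(X)$ into a part lying in the "bulk" of $\mathcal{M}$ (points at distance at least $c_{\gamma_n}$ from $\partial\mathcal{M}$) and a boundary collar of width $c_{\gamma_n}$. For the boundary collar, the volume is controlled purely geometrically: since $\mathcal{M}$ is a compact $\mathcal{C}^2$ submanifold with $\mathcal{C}^2$ boundary (H0), a tubular-neighbourhood / coarea argument from geometric measure theory shows the $\nu$-measure of $\{y\in\mathcal{M}:\rho(y,\partial\mathcal{M})<c_{\gamma_n}\}$ is $\mathcal{O}(c_{\gamma_n})$, uniformly, with a constant depending only on $\mathcal{M}$. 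So this term is absorbed into $A_\lambda c_{\gamma_n}$ deterministically.

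For the bulk, I would first invoke the finite-sample local validity of $\hat{\mathbf{C}}_n$ (Proposition 2 of \cite{waser14}, quoted after \eqref{estimator}) to relate the local conformity rank $\pi_{n,k}$ to an empirical level set, and then argue that on the event \eqref{nk} (which holds a.s. for $n$ large) each cell $A_k$ contains $\gamma_n\to\infty$ sample points, so that Theorem \ref{th1} applies with $\mathcal{M}_{n_k}$ taken to be the complement of the $c_{n_k}$-collar. Thus $\sup_k\sup_{y\in\mathcal{M}_{n_k}}|\hat{p}(y|A_k)-p(y|A_k)|=o(h_{\gamma_n}/c_{\gamma_n})$ a.s. Next, using H3 I pass from $p(y|A_k)$ to $p(y|x)$ for $x\in A_k$ at the cost of $Lw_n$ (the cell diameter), and using Lemma \ref{lemaux3} (the conditional density is uniformly bounded above) together with the corresponding estimate on the augmented density $\hat{p}^{(x,y)}$ (this is where the new Lemma \ref{cotahatp} enters, needed to run Lemma \ref{lemaux2}), I show the empirical cut-off $\hat{t}_x$ defining $\hat{\mathbf{C}}_n(x)$ is within $o(h_{\gamma_n}/c_{\gamma_n}) + \mathcal{O}(w_n)$ of the oracle cut-off $t_x^{\alpha}$, uniformly in the sample points $X$. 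Finally, the margin/steepness condition H4 converts a uniform cut-off discrepancy $\delta$ into a bound $\nu(\hat{\mathbf{C}}_n(X)\triangle\mathbf{C}_P(X)) = \mathcal{O}(\delta)$ on the symmetric difference in the bulk, via Lemma \ref{lemaux0}; since all the relevant error terms are $\mathcal{O}(c_{\gamma_n})$ under the stated rate conditions $h_{\gamma_n}/c_{\gamma_n}^2\to0$ and $\gamma_nh_{\gamma_n}^{\ell+3}/\log\gamma_n\to\infty$, the bulk contribution is also $\mathcal{O}(c_{\gamma_n})$.

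To obtain the probability bound $\mathcal{O}(n^{-\lambda})$ rather than merely an a.s. statement, I would not use the a.s. version of Theorem \ref{th1} directly but rather its quantitative (exponential / polynomial concentration) counterpart: the deviations of $n_k$ from $b_1nw_n^d/2$ (Bernstein for the multinomial counts), the deviations of $\hat{p}(\cdot|A_k)$ from its mean, and the bias term are each controlled on events of probability $1-\mathcal{O}(n^{-\lambda})$ after a union bound over the $T=\mathcal{O}(w_n^{-d})=\mathrm{poly}(n)$ cells and over a polynomially fine net of $\mathcal{M}$ (covering number $\mathcal{O}(h_n^{-\ell})$, Lipschitz-in-$v$ control of the Gaussian kernel between net points). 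The choice $w_n=(\log n/n)^{1/(d+2)}$ is exactly what makes $nw_n^d/\log n\to\infty$ so these exponential bounds beat any polynomial rate; tuning the net scale and the Bernstein thresholds gives an arbitrarily large power $\lambda$.

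The main obstacle I anticipate is the uniform control near the boundary: Theorem \ref{th1} only gives the kernel estimate on $\mathcal{M}_{n_k}$, and the rate there degrades like $h_{\gamma_n}/c_{\gamma_n}$, so one must choose $c_{\gamma_n}$ large enough that the collar is thin in $\nu$-measure yet small enough that $h_{\gamma_n}/c_{\gamma_n}^2\to0$ — and then verify that H4's margin bound, which is stated for the conditional law of $Y$ given $X=x$ on all of $\mathcal{M}$, still yields the $\mathcal{O}(\delta)$ symmetric-difference bound when the density approximation is only valid off the collar. Reconciling the geometric collar estimate with the statistical level-set estimate, and checking that the boundary part of $\hat{\mathbf{C}}_n(X)\triangle\mathbf{C}_P(X)$ genuinely sits inside the collar (so that it is negligible), is the delicate point, and is precisely the step where the geometric-measure-theory argument flagged in the paragraph before the theorem does its work.
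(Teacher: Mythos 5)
Your proposal follows essentially the same route as the paper's proof: the same bulk/collar decomposition with the geometric-measure-theory bound $\nu(\mathcal{M}\setminus\mathcal{M}_{\gamma_n})=\mathcal{O}(c_{\gamma_n})$, the same Lei--Wasserman level-set machinery for the bulk (sandwiching $\hat{\mathbf{C}}_n(x)$ between level sets of $\hat{p}(\cdot|A_k)$, comparing the empirical and oracle cut-offs, and converting the discrepancy into a symmetric-difference bound via H4), and concentration plus union bounds over cells to reach the $\mathcal{O}(n^{-\lambda})$ rate. Your only slips are cosmetic and do not affect the argument: the sandwich comes from Lemma 3 of \cite{waser14} (not Proposition 2), the cut-off-to-symmetric-difference conversion is Lemma \ref{lemaux3} (Lemma 8 of \cite{waser14}) rather than Lemma \ref{lemaux0}, the uniform bound on the estimated density is Lemma \ref{cotahatp}, and the collar contribution needs no containment check since it is bounded directly by the collar volume.
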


\begin{remark} \label{rateth2} It can be easily seen that, up to logarithmic factors, the rate of $c_{\gamma_n}$ is $n^{-1/((\ell+3)(d+2))}$. \end{remark}

\section{Non-compact manifolds}\label{noncomp}

In this section, we discuss how to remove the compactness assumption imposed in Theorem~\ref{thmain}, while retaining all the other hypotheses. We will prove that the set \(\hat{\mathbf{C}}_n(X_{n+1})\) satisfies \emph{asymptotic conditional validity} (see \cite{izbicki2022cd}). In other words, there exist random sets \(\Lambda_n\) such that
\[
\mathbb{P}\Bigl(X_{n+1}\in \Lambda_n | \Lambda_n\Bigr)= 1 - o(1),
\]
and
\[
\sup_{x_{n+1}\in\Lambda_n}\Bigl|\mathbb{P}\Bigl(Y_{n+1}\in \hat{\mathbf{C}}_n(X_{n+1}) \,\Big\vert\, X_{n+1}=x_{n+1}\Bigr) - \bigl(1-\alpha\bigr)\Bigr| = o(1).
\]

According to Theorem~6 in \cite{izbicki2022cd}, to obtain asymptotic conditional validity it suffices to prove that 
\begin{equation}\label{noncompact}
	\mathbb{P}\Bigl(Y_{n+1}\in \hat{\mathbf{C}}_n(X_{n+1})\triangle \mathbf{C}_P(X_{n+1})\Bigr) = o(1).
\end{equation}

\begin{theorem}\label{noncompact2}
	Assume  H0 to H5. Let \(\hat{\mathbf{C}}_n(x)\) be given by \eqref{estimator} and \(\mathbf{C}_P(x)\) by \eqref{oracle}. Then, \(\hat{\mathbf{C}}_n(X_{n+1})\) is asymptotically conditionally valid.
\end{theorem}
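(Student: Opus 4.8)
The plan is to reduce the whole statement to the single estimate \eqref{noncompact}, after which Theorem~6 of \cite{izbicki2022cd} delivers asymptotic conditional validity. First I would note what survives the removal of compactness of $\mathcal{M}$ unchanged: since $\operatorname{supp}(P_X)=[0,1]^d$ by H2, the cell--occupancy bound \eqref{nk} and the sequences $w_n,\gamma_n$ are untouched, and by Lemma~\ref{lemaux3} there is still a finite $M$ with $\|p(\cdot|x)\|_\infty\le M$ for every $x$. The one feature that genuinely breaks is that $\hat{\mathbf{C}}_n(x)$ and $\mathbf{C}_P(x)$ may now have infinite $\nu$-volume, so the global bound \eqref{thmaineq} is vacuous; the remedy is to truncate the range of $Y$.

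Concretely, I would fix $\varepsilon>0$, take a compact exhaustion $\mathcal{K}_1\subset\mathcal{K}_2\subset\cdots$ of $\mathcal{M}$ (so $\mathbb{P}(Y\notin\mathcal{K}_m)\to0$ since $\mathcal{K}_m\uparrow\mathcal{M}$), choose $m$ with $\mathbb{P}(Y\notin\mathcal{K}_m)<\varepsilon$, and --- by a routine differential-topology construction using a collar of $\partial\mathcal{M}$ --- fix a compact $\mathcal{C}^2$ submanifold-with-boundary $\widetilde{\mathcal{M}}_m$ with $\mathcal{K}_m\subset\operatorname{int}\widetilde{\mathcal{M}}_m\subset\widetilde{\mathcal{M}}_m\subset\mathcal{M}$ that coincides with $\mathcal{M}$ on a neighbourhood of $\partial\mathcal{M}\cap\mathcal{K}_m$. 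Splitting on $\{Y_{n+1}\in\mathcal{K}_m\}$, then conditioning the second piece on $(\aleph_n,X_{n+1})$ and integrating $p(\cdot|X_{n+1})\le M$ over the indicated set, I obtain
\[
\mathbb{P}\big(Y_{n+1}\in\hat{\mathbf{C}}_n(X_{n+1})\triangle\mathbf{C}_P(X_{n+1})\big)\;\le\;\varepsilon\;+\;M\,\mathbb{E}\big[\nu\big((\hat{\mathbf{C}}_n(X_{n+1})\triangle\mathbf{C}_P(X_{n+1}))\cap\mathcal{K}_m\big)\big],
\]
and the expectation is at most $M\,\nu(\mathcal{K}_m)<\infty$ precisely because of the intersection with the compact set $\mathcal{K}_m$.

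Next I would show that, for each fixed $m$, $\nu\big((\hat{\mathbf{C}}_n(X_{n+1})\triangle\mathbf{C}_P(X_{n+1}))\cap\mathcal{K}_m\big)\to0$ almost surely; bounded convergence (the integrand is dominated by $\nu(\mathcal{K}_m)$) then forces the expectation to $0$, so that $\limsup_n\mathbb{P}(Y_{n+1}\in\hat{\mathbf{C}}_n(X_{n+1})\triangle\mathbf{C}_P(X_{n+1}))\le\varepsilon$, and letting $\varepsilon\downarrow0$ (equivalently $m\to\infty$) proves \eqref{noncompact}. The almost-sure claim is obtained by replaying the proof of Theorem~\ref{thmain} with $\mathcal{M}$ replaced by the compact $\widetilde{\mathcal{M}}_m$ and every $\nu$-volume intersected with $\mathcal{K}_m$. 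Its three ingredients localize: (i) Theorem~\ref{th1} applies to $\widetilde{\mathcal{M}}_m$, and because $K$ is Gaussian the points $Y_i$ falling outside $\widetilde{\mathcal{M}}_m$ perturb $\hat{p}(\cdot|A_k)$ on $\mathcal{K}_m$ by only a super-polynomially small amount, so $\hat{p}(\cdot|A_k)$ is uniformly consistent on $\mathcal{K}_m$ away from $\partial\mathcal{M}$; (ii) the $c_n$-tube of $\partial\mathcal{M}$ meets the compact $\mathcal{K}_m$ in a set of $\nu$-measure $O_m(c_n)\to0$, by the same geometric-measure-theoretic estimate used in Theorem~\ref{thmain}, now applied on the compact $\partial\mathcal{M}\cap\mathcal{K}_m$; and (iii) on the part of $\hat{\mathbf{C}}_n(X_{n+1})\triangle\mathbf{C}_P(X_{n+1})$ lying in $\mathcal{K}_m$ at distance $\ge c_n$ from $\partial\mathcal{M}$, Lemmas~\ref{lemaux0}, \ref{lemaux2}, \ref{cotahatp} and H4 give exactly the same control as in the compact case --- valid at $X_{n+1}$ as well, since within its partition cell $A_k$ the covariate $X_{n+1}$ is exchangeable with the sampled ones.

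I expect the main obstacle to be this geometric bookkeeping in (i)--(ii): one must be certain that truncating $\mathcal{M}$ to the compact piece $\mathcal{K}_m$ does not manufacture a spurious boundary that the kernel estimator mistakes for $\partial\mathcal{M}$, which is exactly why $\widetilde{\mathcal{M}}_m$ is arranged to agree with $\mathcal{M}$ near the genuine boundary and to swallow $\mathcal{K}_m$ with room to spare (so that the interior points of $\mathcal{K}_m$ stay uniformly far from the artificial part of $\partial\widetilde{\mathcal{M}}_m$, and Theorem~\ref{th1}'s hypotheses on $c_n$ hold with respect to the true boundary). Once that is in place, the remainder is simply the compact argument of Theorem~\ref{thmain} run verbatim on $\widetilde{\mathcal{M}}_m$.
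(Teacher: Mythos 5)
Your proposal follows essentially the same route as the paper: reduce the claim to \eqref{noncompact} via Theorem~6 of \cite{izbicki2022cd}, truncate $\mathcal{M}$ to a compact submanifold satisfying H0 that carries all but $\varepsilon$ of the mass of $Y$ (the paper gets this from smooth exhaustion functions, Proposition~2.28 of \cite{lee2013introduction}), run the compact-case machinery of Theorem~\ref{thmain} on that piece, and handle the thin strip near its boundary separately --- your additional bookkeeping about the artificial boundary and the Gaussian kernel tails only makes explicit what the paper's sketch leaves implicit. One minor slip: the finite bound $M$ on $p(\cdot|x)$ does not come from Lemma~\ref{lemaux3} (which concerns the cut-off levels), but from continuity of $p(y|x)$ in $y$ together with H3 and the compactness of the truncated manifold, which is exactly what the paper invokes and is all you need since you only integrate over the compact piece.
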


It worth to be mentioned that asymptotic conditional validity is considerably weaker than \eqref{thmaineq} when \(p(y| x)\) is bounded from above. 

\section{Computational aspects}\label{highdim}

A different method for estimating the conformal region is introduced in \cite{izbicki2022cd}.
Instead of using a partition of $\mathbb{R}^d$ on cubes to get a partition  of \(X_1,\dots,X_n\), to build kernel-based estimators, this approach
employs a new partition constructed as follows.
Consider, as in \cite{izbicki2022cd}, the functions
\[
H(z| x) = \int_{\{y : p(y| x) \le z\}} p(y| x)\,d\nu(y),
\qquad
\hat{H}(z | x) = \int_{\{y : \hat{p}(y| x) \le z\}} \hat{p}(y | x)\,d\nu(y).
\]
where $\hat{p}(y| x)$ is any estimator of $p(y| x)$, not necessarily kernel-based. Define the conditional \(\alpha\)-quantile of \(Y| X\) by $q_\alpha(x) = H^{-1}(\alpha| x).$ Similarly, let $\hat{q}_\alpha(x) = \hat{H}^{-1}(\alpha | x)$,
be the estimate of the conditional \(\alpha\)-quantile of \(Y| X\).

Let \(\mathcal{F}\) be a partition of \(\mathbb{R}^+\). Then we create a partition
\(\mathcal{A} = \{A_1,\dots,A_T\}\) of \(X_1,\dots,X_n\) by assigning \(X_i\) and \(X_j\) to the
same set \(A_r\in\mathcal{A}\) if and only if \(\hat{q}_\alpha(X_i)\) and \(\hat{q}_\alpha(X_j)\)
lie in the same element of \(\mathcal{F}\). We then build the kernel-based estimator \eqref{kernel}
according to this new partition. This approach is particularly suitable for  high-dimensional feature spaces.

We propose either the set \(\hat{\mathbf{C}}_n(x)\) (as defined in \eqref{estimator}) or one of the algorithms presented in \cite{izbicki2022cd}. According to Theorem 25 in \cite{izbicki2022cd}, to establish the asymptotic conditional validity of this proposal (in the sense of \eqref{noncompact}), it is necessary to assume some smoothness of \(H(z,x)\) (see Assumption 23 in \cite{izbicki2022cd}), as well as the compactness of \(M\) and the existence of sequences \(\eta_n = o(1)\) and \(\rho_n = o(1)\) satisfying
\[
\mathbb{P}\!\Biggl(
\mathbb{E}\Bigg[
\sup_{y \in \mathcal{M}} \bigl(\hat{p}(y|X) - p(y|X)\bigr)^2
\;\Big|\; \hat{p}
\Bigg] 
\;\ge\; \eta_n
\Biggr) 
\;\le\; \rho_n.
\]
This condition indeed holds in our case; however, proving it in detail would require rewriting the proof of Theorem 1 in \cite{chola21}. We leave this verification to the reader.
As in the non-compact case, the consistency result obtained is considerably weaker than \eqref{thmaineq}.

Since the computational burden  of approximating $\hat{\mathbf{C}}_n(x)$ is high, \cite{waser14} proposes replacing that set with $\mathbf{C}_n^+(x)$, see \eqref{cmas}, which, by including $\hat{\mathbf{C}}_n(x)$, has coverage of at least $1 - \alpha$.  Let $A_x\in \mathcal{A}$ the element that contains $x$.

\begin{enumerate}
	\item Let $\hat{p}(x,y)$ be the joint density estimator based on the subsample (of cardinality  $n_x$) of points for which $X_i\in A_x$ 
	\item Let $Z_i = (X_i, Y_i)$ for $i=1,\dots,n_x$, and let 
	$Z_{(1)}, Z_{(2)}, \dots, Z_{(n_x)}$ denote the sample ordered 
	increasingly by $\hat{p}(X_i, Y_i)$.
	\item Let $j = \lfloor n_x \alpha \rfloor$ and define
	\begin{equation}\label{cmas}
		\mathbf{C}_n^+(x) \;=\; \Bigl\{\, y :\; 
		\hat{p}(x,y) \;\ge\; \hat{p}\bigl(X_{(j)}, Y_{(j)}\bigr) 
		\;-\; \frac{K(0)^2}{n_x\,h^{d+1}} 
		\Bigr\}.
	\end{equation}
\end{enumerate}

\section{Simulation examples}

\subsection{Toy model: Output on the sphere}

We consider a regression model with output variable denoted by $Y_i$ defined on the unit sphere $S^2$. The input variable $X_i$ takes values in the interval $[-1, 1]$. The model is given by the following probability distribution:

$$Y_i \sim \mathbb{F} \left( \frac{\eta+ \beta X_i}{ \Vert \eta + \beta X_i \Vert }, \kappa  \right), \, i=1,2 \ldots, 400.$$ 

Here, $\mathbb{F}$ denotes the Von Mises--Fisher distribution (see \cite{mardia2000}), and $X_i \sim \textrm{U}(-1, 1)$ is an i.i.d. sample. For the simulation study, we set $\eta = (1, 0, 0)$, $\beta = (0, 0, 1)$, and $\kappa = 200$.

To estimate the kernel density, we use the estimator given by Equation \eqref{kernel} with a setwidth parameter $h = 0.5$. Additionally, we partition the data into intervals $A_k = \left(-1 + (k-1)/2, -1 + k/2\right)$, where $k = 1, \ldots, 4$.

Using the proposed method, we obtain a $90\%$ confidence set $\hat{\mathbf{C}}_{400}(0)$, shown in green in Figure \ref{bola}. In this particular case, the unobserved output was $Y = (1, 0, 0)$, which is illustrated in purple in Figure \ref{bola}. Points belonging to the boundary of the theoretical confidence set $\ref{oracle}$ for $\alpha = 0.1$ are displayed in blue.

\begin{figure}
	\centering
	\includegraphics[width=65mm]{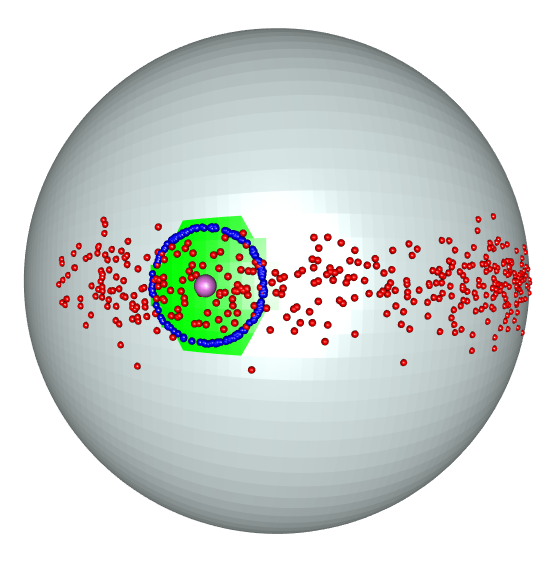 }
	\caption{Red: a sample $Y_1, \ldots, Y_{400}$ on $S^2$. Green: the estimated $90\%$-confidence set for $Y$  if $x=0$. Purple: the point prediction for $x=0$, which is $y=(1,0,0)$.  Blue dots belong to the boundary of the theoretical confidence set \eqref{oracle} for $\alpha=0.1$. }
	\label{bola}
\end{figure}

\subsection{Output in a Stiefel manifold.}

It is common to summarize information from a set of variables using Principal Component Analysis (PCA) or Factor Analysis. This approach is frequently employed in constructing indices within the social sciences, as demonstrated by Vyas (2006). Subsequently, researchers often attempt to explain these indices using other covariates. To illustrate this scenario, we provide a `toy example' using simulations.

We will define a regression model whose output lies on the Stiefel Riemannian manifold $SO(3,2)$, represented by two-dimensional orthonormal vectors in $\mathbb{R}^3$. The construction of the regression is as follows.

The input variable, denoted by $X$, is uniformly distributed on the interval $[0,1]$. For each value of $X$, we sample $400$ points from a 3-dimensional Gaussian distribution centered at the origin $(0,0,0)$. The covariance matrix of this Gaussian distribution has eigenvalues $2$, $1$, and $1/2$, along with the corresponding eigenvectors $(2+X,2+2X,2-X)$, $(-2+X,2-3X,-2+X)$, and $(1,0,0)$, respectively.

After generating these $400$ points for each value of $X$, the output $Y\in SO(3,2)$ is obtained by applying a PCA to these points and retaining only the first two principal directions. To conduct the simulations analysis, we sample $500$ points, denoted by $X_1,\dots, X_{500}$.

In Figure \ref{sti}, we display a sample of $500$ output values, represented as gray arcs. The specific output value $y_{0.1}\in SO(3,2)$, corresponding to the input $X=0.1$, is depicted as a purple dotted arc.

To estimate the  density, we use the estimator given by Equation \eqref{kernel}, with a setwidth parameter $h = 1$. Additionally, we divide the data into intervals $A_k = \left((k-1)/5, k/5\right)$, where $k = 1, \ldots, 5$. It is important to note that the $SO(3,2)$ data are embedded in $\mathbb{R}^6$ ($D=6$), and we consider the Euclidean distance in this space. The dimension of the submanifold in this case is $\ell=3$.

To visualize the confidence set, we draw $10000$ points within $SO(3,2)$. The points falling within the confidence set are highlighted in orange in Figure \ref{sti}.

\begin{figure}
	\centering
	\includegraphics[width=65mm]{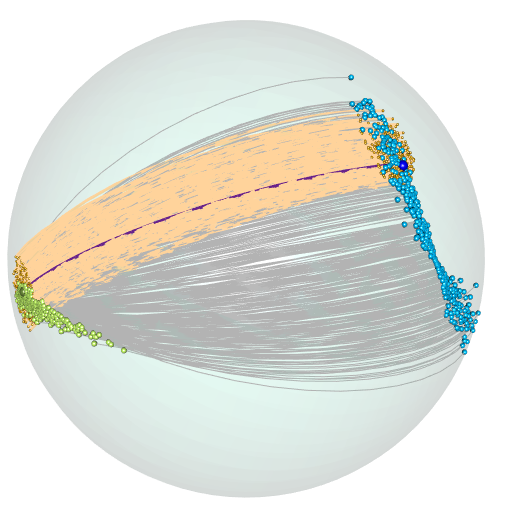 }
	\caption{A uniform sample of 10000 points in the $95\%$ confidence set (orange arcs) for $Y \in SO(3,2)$  if $x=0.1$.  The sample  value for $x=0.1$ is the $3\times 2$ matrix, $y_x$, whose first column is $(0.62,0.5,0.6)$ and second column is $(-0.3,0.86,-0.44)$, which is  depicted in purple. The observations are represented by gray arcs in the sample in the output.}
	\label{sti}
\end{figure}

\section{Real-data examples}

\subsection{Example 1: Regression between cylindrical manifolds with application to wind modeling}
 
In this study  we address a problem of substantial practical relevance for the development of wind energy infrastructure in Uruguay. Specifically, the goal is construct prediction sets for wind intensity \( R \) and direction \( \theta \), at a meteorological station located in Montevideo, based on simultaneous observations from a nearby station in Maldonado. These variables together form a point on a cylinder \( \mathcal{D} = S^1 \times \mathbb{R}^+ \subset \mathbb{R}^3 \), where \( S^1 \) represents the directional component (wind angle) and \( \mathbb{R}^+ \) the non-negative intensity. Thus, the regression problem consists of learning a mapping from one cylindrical manifold to another.

This task is not only of theoretical interest due to the non-Euclidean structure of the input and output spaces, but also of  \textit{strategic importance for national energy planning}. Accurate prediction of wind behavior at candidate sites is a critical step in the optimal placement of wind turbines, which are a major source of renewable energy in Uruguay’s energy matrix.

We focus on moderate and extreme wind regimes (within the range 0 to 20 m/s), using hourly meteorological data recorded during the month of July between 2008 and 2016. The data were collected at two locations: the Laguna del Sauce Weather Station in Maldonado (Station 1) and the Carrasco International Airport Weather Station in Montevideo (Station 2), which are approximately 84 kilometers apart. To ensure reliability, only time points for which both stations reported valid measurements were retained, resulting in a total of 5,362 matched observations.

Our objective is to construct  conformal prediction sets, for wind conditions at Station 2 given observations from Station 1. These sets aim to quantify predictive uncertainty while respecting the cylindrical geometry of the variables involved, offering robust tools for risk assessment and decision-making in wind farm siting.

Figures \ref{corD} and \ref{corI} show that there is a correlation in both wind direction and intensity between the two stations.  The $\xi$-correlation, see \cite{chatterjee2021},  between the intensities is $0.45$. In the case of the directions, the angular correlation is $0.63$.  The angular correlation is the linear correlation between the variables $\sin \left( \theta_1-  \bar{\theta}_1\right)$  and  $ \sin \left( \theta_2-  \bar{\theta}_2 \right)$,  see \cite{jammalamadaka1988}. 

\begin{figure}
	\centering
	\includegraphics[width=65mm]{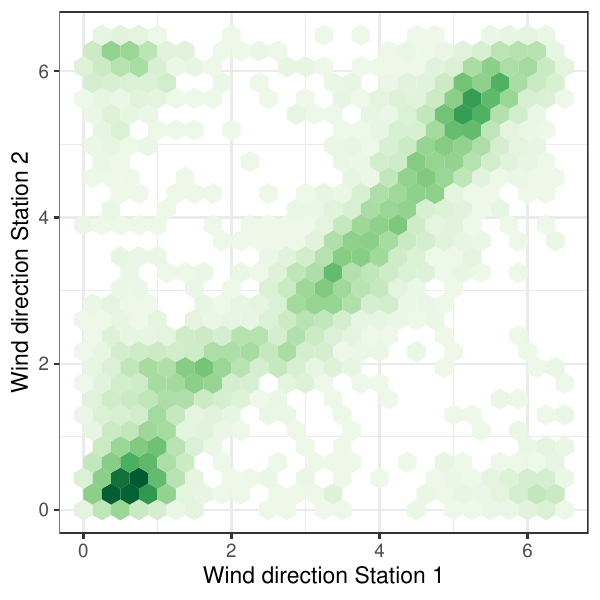 }
	\caption{Scatter plot between the angular directions (in radians) of the winds recorded at Station 1 and Station 2 in the months of July between 2008 and 2016.}
	\label{corD}
\end{figure}

\begin{figure}
	\centering
	\includegraphics[width=65mm]{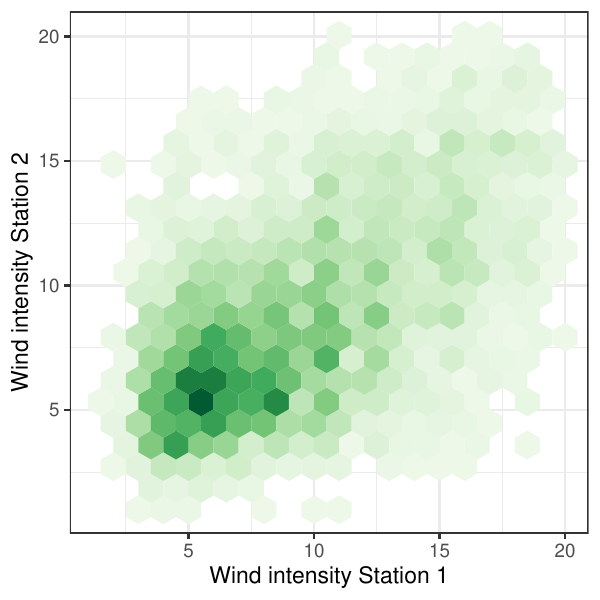 }
	\caption{Scatter plot between the intensities (in m/s) of the winds recorded at Station 1 and Station 2 in the months of July between 2008 and 2016.}
	\label{corI}
\end{figure}

In this example, the partition is $$A_{i,j}= \Big\{ (\theta,R) \in \mathcal{D} : \theta \in [ i \pi/4 , (i+1)\pi/4] \textrm{ and }  R \in [4j , 4(j+1)]  \Big \}$$ with $i=0,1,\ldots,7$ and $j=0,1,\ldots,4$.  For the kernel density estimator we choose $h_k =0.4$.

Figure \ref{cil} shows the confidence set (at 80\%) obtained by our method for $(\theta_1, R_1)= (2.3 \textrm{ radians},  5.1 \textrm{ m/s})$. This was recorded at Station 1 on 2008-07-01 at 7 pm.  On the same date and time the data recorded at Station 2 were $(\theta_2, R_2)= (2.4 \textrm{ radians},  6.6 \textrm{ m/s})$ (point depicted in purple in Figure \ref{cil}). 

\begin{figure}
	\centering
	\includegraphics[width=105mm]{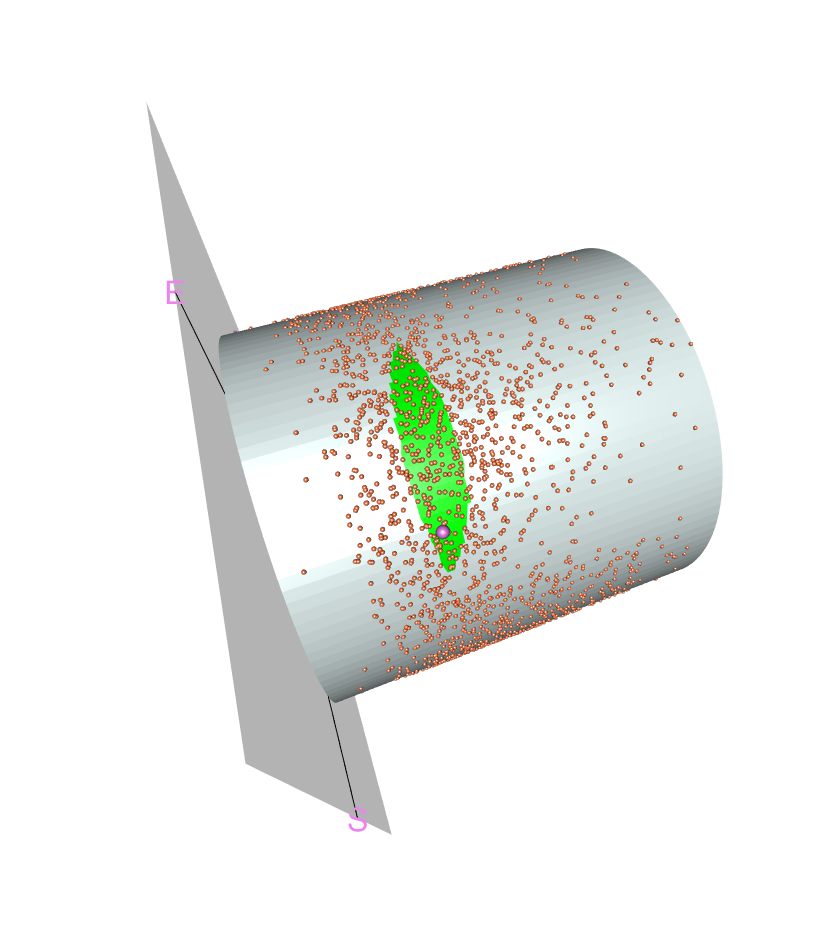 }
	\caption{Winds (not exceeding 20 m/s) at Station 2 in the months of July between 2008 and 2016 (red). The green area is the 80\% confidence set for the data recorded at Station 1 of $(2.3 \textrm{ radians}, 5.1 \textrm{ m/s})$  on 2008-07-01 at 7 pm. The purple dot represents the wind recorded at the same time at Station 2 ($(\theta_2, R_2)= (2.4 \textrm{ radians},  6.6 \textrm{ m/s})$).}
	\label{cil}
\end{figure}

\subsection{Example 2: Regression on the simplex with application to multiclass classification}
 
Regression models whose outputs lie in the $(d-1)$-dimensional unit simplex $\Delta^{d-1}$ play an essential role in many applied contexts (see, for example, \cite{aitchison1986statistical,pawlowsky2015modeling}). In supervised classification, a variety of machine-learning algorithms produce vectors of class-membership probabilities—i.e., elements of $\Delta^{d-1}$—which we denote by $Y_i$. Here, each $Y_i$ is obtained by using Random Forest (RF). Because these vectors reside in $\Delta^{d-1}$, their components naturally estimate the probability of membership in each class. The final class prediction is then chosen as the one with the highest estimated probability, equivalent to identifying the Voronoi cell of the simplex in which $Y_i$ falls (see Figure~\ref{fig:simplex}).

%

Our goal is to construct a \(100(1-\alpha)\%\) confidence region for observations on the simplex \(\Delta^{d-1}\) by adapting Algorithm 1 from \cite{izbicki2022cd} (CD-split) to this setting. In particular, we follow \cite{izbicki2017converting} to estimate the conditional density, employing an appropriate Fourier‐type basis on the simplex—such as the Bernstein polynomial basis \cite{farouki2003construction,ghosal2001convergence}.

The intersection of this confidence region with the Voronoi tessellation of $\Delta^{d-1}$ yields the \emph{conformal class set}, i.e., the subset of classes that are statistically consistent with the observation at level \( \alpha \). Moreover, this approach provides insight into how the predictive uncertainty is distributed among the classes, by analyzing the proportion of the confidence region that falls within each Voronoi cell.

\begin{figure}
	\centering
	\includegraphics[width=80mm]{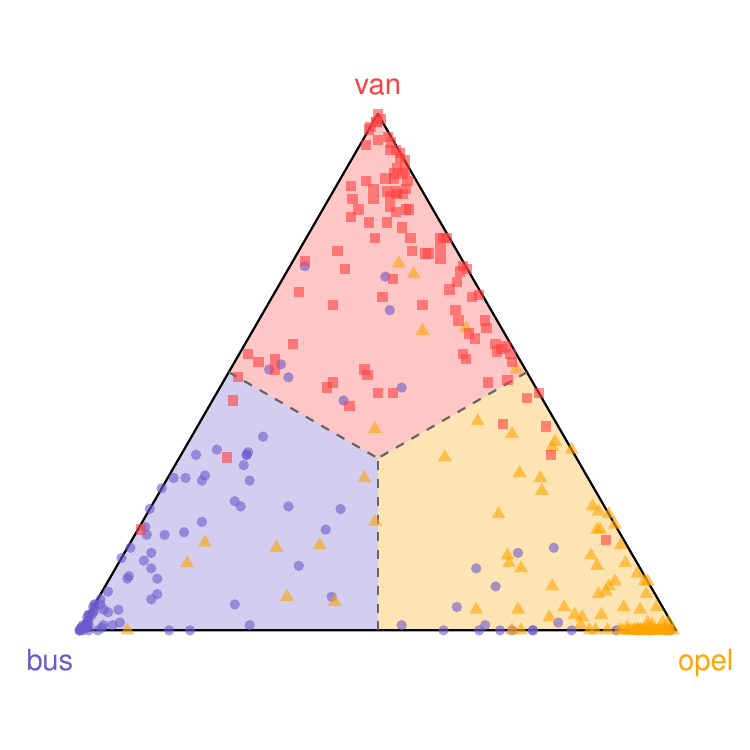 }
	\caption{Test set predictions for vehicle classification. Colors denote the true class labels, and Voronoi regions indicate the predicted class according to Random Forest.
	}
	\label{fig:simplex}
\end{figure}

An application of the proposed methodology is carried out on a real-world dataset, where the covariate space is 5-dimensional and the response variable lies on $\Delta^2$, representing class probabilities, obtained, we it was said, by means of RF. To this end, we consider the publicly available \texttt{Vehicle} dataset from the \texttt{mlbench} package in \texttt{R} \cite{leisch2007mlbench}.

The \texttt{Vehicle} dataset originates from a benchmark study conducted at the Turing Institute in the early 1990s. It was designed to evaluate the performance of classification algorithms in distinguishing between different types of road vehicles based on geometric features extracted from digitized images of their silhouettes. Each observation in the dataset corresponds to a vehicle image and is characterized by $5$ numerical covariates, such as aspect ratio, edge count, and moment-based shape descriptors.

The categorical response variable indicates the type of vehicle. For the purposes of this study, we restrict attention to a subset of the data containing three of the four classes (\texttt{bus}, \texttt{opel}, and \texttt{van}), so that the associated output probabilities lie on $\Delta^2$. This structure is ideal for illustrating our simplex-based predictive modelling approach. The dataset contains a total of $628$ observations, and its moderate size, combined with its multi-class nature, makes it well-suited for benchmarking methods involving compositional or probabilistic responses. 

The dataset is randomly split into two equal parts: 50\% of the observations are used for training the model, and the remaining 50\% are reserved as a test set for evaluating predictive performance.

Figure~\ref{fig:simplex2} displays the test data projected onto $\Delta^2$. Each point corresponds to a predicted probability vector, and its color indicates the true class label. The predicted class for a point is the Voronoi cell that contains the point.

We assess confidence levels of \(90\%\) and \(95\%\) for the covariate vector \(\mathbf{x}_{n+1} = (84, 45, 66, 150, 65)\). The corresponding class probabilities predicted by the Random Forest are \(\mathbf{y}_{n+1} = (0.845, 0.01, 0.145)\), highlighted in purple in Figure~\ref{fig:simplex2}. The resulting class band, restricted to the ``bus’’ and ``van’’ categories, offers a detailed decomposition of predictive uncertainty by indicating the fraction of the confidence region attributable to each class. At the \(95\%\) confidence level, approximately \(93\%\) of the band resides within the ``bus’’ region, while the remaining \(7\%\) falls within the ``van’’ region.

\begin{figure}
	\centering
	\includegraphics[width=65mm]{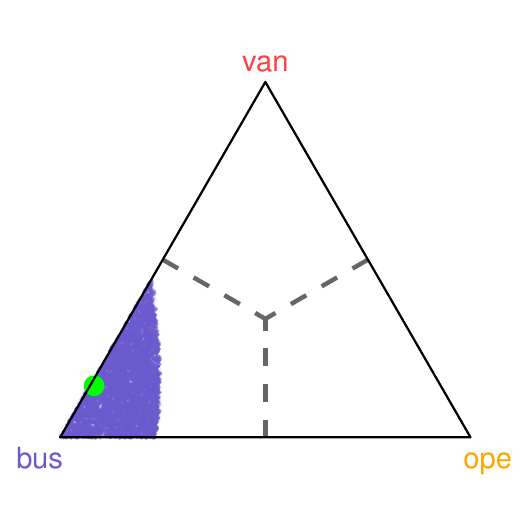 }
	\includegraphics[width=65mm]{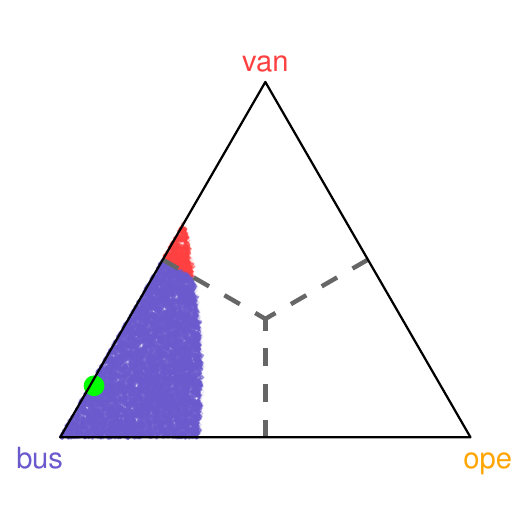 }
	
	\caption{
		Left panel: $90\%$ conformal prediction region; right panel: $95\%$ conformal prediction region for $y_{n+1} = (0.845,\ 0.01,\ 0.145)$ (shown in green). 
		The background colors represent the Voronoi regions of the predicted classes over the 2-dimensional simplex $\Delta^2$. 
		Classification was performed using a Random Forest model.
	}
	\label{fig:simplex2}
\end{figure}

\color{black}
\section{Conclusions and future work}

We work within the conformal‐inference framework, where the response variable takes values on a Riemannian manifold while the covariates lie in \(\mathbb{R}^d\). Under assumptions analogous to those in \cite{waser14}, we extend their results to show that the conditional oracle set \(\mathbf{C}_P(x)\) can be estimated consistently. A central tool in our analysis is the almost‐sure, uniform consistency of the classical kernel density estimator on manifolds, as proved in \cite{chola21}. We also cover the cases of manifolds with boundary and non‐compact manifolds, and we discuss computational strategies for handling high‐dimensional input data.

Our theoretical contributions are demonstrated through two simulation studies and two real-world data examples.

Finally, one could explore alternative nonconformity scores beyond the KDE-based criterion we employ—for example, quantile-based or regression-based scores—each of which demands a nontrivial adaptation of the Euclidean theory to the manifold setting.

\section*{Appendix}

\begin{proof}[Proof of Theorem \ref{th1}]  Since $n_kh_{n_k}^{\ell+1}/\log(n_k)\to \infty$, then, from theorem 1 of \cite{chola21},  for all $\epsilon>0$, with probability one, for all $n$ large enough,
	\begin{equation}\label{estdensi}
		\frac{c_{n_k}}{h_{n_k}}\sup_{y\in \mathcal{M}_{n_k}}|\hat{p}(y|A_k)- p(y|A_k)|\leq \epsilon.
	\end{equation}
	Note that the index $n_0$ for which \eqref{estdensi} holds depends on the choice of $k$. However, from \eqref{nk}, we know that with probability one, $n_k\geq \gamma_n$ for all $k$ when $n$ is sufficiently large. Since $h_n/c_n\to 0$ monotonically, it follows that for all $k$, $c_{n_k}/h_{n_k}\leq c_{\gamma_n}/h_{\gamma_n}$. Hence, we can conclude that \eqref{convunif} holds for all $k$. 
\end{proof}

\subsection{Proof of Theorem \ref{thmain}}\label{proofthmain}

The proof of Theorem \ref{thmain} is based on some technical lemmas. To state them, let us define $L_x(t)$ as the set of $y$ such that $p(y|x) \geq t$, and $L_x^l(t)$ as the set of $y$ such that $p(y|x) \leq t$. We also define $\hat{L}_x(t)$ and $\hat{L}_k^l(t)$ as the corresponding sets for $\hat{p}(\cdot|A_k)$, given by \eqref{kernel}.

\begin{lemma} \label{lemaux0}
	Under the hypotheses of Theorem \ref{th1}, assume also H0 to H3. Let $R_n(x)=\sup_{y\in \mathcal{M}_{\gamma_n}}|\hat{p}(y|A_k)-p(y|x)|$. Then, with probability one, for $n$ sufficiently large, we have
	\begin{equation}\label{lemauxeq1}
		\sup_{x\in \textrm{supp}(P_X)} R_n(x)\leq h_{\gamma_n}/c_{\gamma_n}+Lw_n\sqrt{d},
	\end{equation}
	where $\gamma_n=\lfloor b_1nw_n^d/2\rfloor$ and $\inf_{x\in M_{\gamma_n}} \rho(x,\partial \mathcal{M})=c_{\gamma_n}$.
\end{lemma}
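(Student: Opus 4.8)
The plan is to bound $R_n(x)$ by inserting the intermediate quantity $p(y\mid A_k)$, where $A_k$ is the cube of the partition containing $x$, and then to control the two resulting pieces separately: an \emph{estimation error} $\sup_{y\in\mathcal{M}_{\gamma_n}}|\hat{p}(y\mid A_k)-p(y\mid A_k)|$ and a \emph{bias/approximation error} $\sup_{y\in\mathcal{M}_{\gamma_n}}|p(y\mid A_k)-p(y\mid x)|$. So first I would fix $x\in\mathrm{supp}(P_X)=[0,1]^d$, let $k=k(x)$ be the index with $x\in A_k$, and write, by the triangle inequality,
\[
R_n(x)\;\le\;\sup_{y\in\mathcal{M}_{\gamma_n}}\big|\hat{p}(y\mid A_k)-p(y\mid A_k)\big|\;+\;\sup_{y\in\mathcal{M}_{\gamma_n}}\big|p(y\mid A_k)-p(y\mid x)\big|.
\]

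For the estimation term, I would invoke Theorem~\ref{th1}: since by \eqref{nk} we have $n_k\ge\gamma_n$ for all $k$ once $n$ is large (a.s.), and $h_n/c_n\to 0$ monotonically, the theorem gives $\sup_k\sup_{y\in\mathcal{M}_{n_k}}|\hat{p}(y\mid A_k)-p(y\mid A_k)|=o(h_{\gamma_n}/c_{\gamma_n})$ a.s. In particular, because $\mathcal{M}_{\gamma_n}\subset\mathcal{M}_{n_k}$ (the sets $\mathcal{M}_n$ shrink toward the boundary as $n$ grows, so a larger index gives a larger set — here $n_k\ge\gamma_n$ forces $\mathcal{M}_{\gamma_n}\subseteq\mathcal{M}_{n_k}$; this monotonicity is exactly what the hypothesis $h_n/c_n\downarrow$ encodes), this estimation term is $o(h_{\gamma_n}/c_{\gamma_n})\le h_{\gamma_n}/c_{\gamma_n}$ for $n$ large, uniformly in $x$ (hence uniformly in $k$).

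For the bias term I would use H3: $p(\cdot\mid x)$ is Lipschitz in $x$ with constant $L$ in sup-norm over $y$. Since $p(y\mid A_k)=\int_{A_k}p(y\mid u)\,P_X(du\mid X\in A_k)$ is an average of $p(y\mid u)$ over $u\in A_k$, and $A_k$ is a cube of side $w_n$ (so $\mathrm{diam}(A_k)=w_n\sqrt{d}$), for any $y$,
\[
\big|p(y\mid A_k)-p(y\mid x)\big|\;\le\;\int_{A_k}\big|p(y\mid u)-p(y\mid x)\big|\,P_X(du\mid X\in A_k)\;\le\;L\,w_n\sqrt{d},
\]
and taking the sup over $y$ and then over $x$ keeps the bound $Lw_n\sqrt{d}$. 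Adding the two pieces yields \eqref{lemauxeq1}.

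The main obstacle — really the only delicate point — is the bookkeeping around which $\mathcal{M}_n$-set one is allowed to take the supremum over, and making sure the convergence from Theorem~\ref{th1} is genuinely uniform in $k$ at the common index $\gamma_n$. One must check carefully that the a.s. event on which \eqref{nk} holds and the a.s. event from Theorem~\ref{th1} can be intersected (both hold for $n$ large with probability one), and that replacing $\mathcal{M}_{n_k}$ by the smaller $\mathcal{M}_{\gamma_n}$ only helps. A minor notational caveat: the statement writes $\hat{p}(y\mid A_k)$ without explicitly naming $k=k(x)$, so I would make that identification explicit at the outset. Everything else is the routine triangle-inequality-plus-Lipschitz argument sketched above.
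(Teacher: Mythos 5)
Your proposal is correct and follows essentially the same route as the paper: the triangle inequality through the intermediate quantity $p(y\mid A_k)$, Theorem~\ref{th1} together with \eqref{nk} and the monotonicity of $h_n/c_n$ to bound the estimation term by $h_{\gamma_n}/c_{\gamma_n}$, and H3 with the cube diameter $w_n\sqrt{d}$ for the bias term (your averaging argument $p(y\mid A_k)=\int_{A_k}p(y\mid u)\,P_X(du\mid X\in A_k)$ in fact states this step more explicitly than the paper does).
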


\begin{proof}
	
	From \eqref{nk}, we can assume that $n$ is large enough and such that for all $k$, $b_1nw_n^d\leq n_k\leq b_2nw_n^d$. From \eqref{convunif}, we obtain with probability one for $n$ large enough,
	\begin{equation}\label{lemauxeq0}
		\sup_{y\in \mathcal{M}_{\gamma_n}}|\hat{p}(y|A_k)-p(y|A_k)|\leq h_{n_k}/c_{n_k}.
	\end{equation}
	Using assumption H3 and \eqref{nk}, we have
	\begin{align*}
		\sup_{x\in \textrm{supp}(P_X)} R_n(x) &\leq \sup_{x\in \textrm{supp} (P_X)}\sup_{y\in \mathcal{M}_{\gamma_n}}|\hat{p}(y|A_k)-p(y|x)| \\
		\hspace{-1cm}	&\leq \sup_{x\in \textrm{supp}(P_X)}\sup_{y\in \mathcal{M}_{\gamma_n}}|\hat{p}(y|A_k)-p(y|A_k)| + \sup_{y\in \mathcal{M}_{\gamma_n}}|p(y|A_k)-p(y|x)| \\
		&\leq h_{n_k}/c_{n_k} + Lw_n|x-y|\\
		&\leq h_{\gamma_n}/c_{\gamma_n} + Lw_n\sqrt{d},
	\end{align*}
	where the last inequality follows from \eqref{nk} and the fact that $$\inf_{x\in \mathcal{M}_{\gamma_n}, y\in \mathcal{M}} |x-y|\geq \rho(\mathcal{M}_{\gamma_n},\partial \mathcal{M}).$$ Thus, we have shown \eqref{lemauxeq1}.
\end{proof}

\begin{lemma}\label{cotahatp} Assume H0 to H4. Let $w_n=(\log(n)/n)^{1/(d+2)}$, and assume that $\gamma_n$ and $h_n$ are as in Lemma \ref{lemaux0}. Then, there exists $C>0$ such that with probability one, for $n$ large enough
	$$ \sup_k \vert \hat{p}(\cdot|A_k) \vert_\infty \leq C.$$
\end{lemma}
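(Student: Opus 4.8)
The plan is to bound $\hat p(\cdot\,|A_k)$ uniformly in $k$ by combining the uniform consistency of the estimator on the ``interior'' region $\mathcal{M}_{\gamma_n}$ (Theorem \ref{th1}) with a direct, crude bound on the estimator near the boundary, where consistency is not available. First I would split the supremum over $y\in\mathcal{M}$ into $y\in\mathcal{M}_{\gamma_n}$ and $y\in\mathcal{M}\setminus\mathcal{M}_{\gamma_n}$. On the interior part, Lemma \ref{lemaux0} gives $\sup_k\sup_{y\in\mathcal{M}_{\gamma_n}}|\hat p(y|A_k)-p(y|x)|\le h_{\gamma_n}/c_{\gamma_n}+Lw_n\sqrt d\to 0$ a.s., so for $n$ large this is $\le 1$, say; combined with $\sup_x\|p(\cdot|x)\|_\infty\le M$ for some finite $M$ (which follows from H1 together with compactness of $\mathcal{M}$ and the regularity in H3/H5, or more simply from H4 which forces the level sets to be non-degenerate — in any case the conditional density is bounded), this controls the interior part by $M+1$.

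The second and main part is the boundary region. Here I would use the explicit form \eqref{kernel}: since $K$ is the Gaussian kernel, $K\le K(0)=1/\sqrt{2\pi}$, so pointwise
\[
\hat p(v|A_k)=\frac{1}{n_kh_{n_k}^{\ell}}\sum_{i=1}^n \I_{\{X_i\in A_k\}}K\!\Big(\frac{\|Y_i-v\|}{h_{n_k}}\Big)\le \frac{K(0)}{h_{n_k}^{\ell}},
\]
which blows up. So the crude bound alone does not suffice; one needs that the number of sample points $Y_i$ with $X_i\in A_k$ falling within an $O(h_{n_k})$-Euclidean-ball of $v$ is at most $O(n_k h_{n_k}^{\ell})$, uniformly in $k$ and $v$. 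The way I would get this is: conditionally on $X_i\in A_k$, the $Y_i$ are i.i.d. with density $p(\cdot|A_k)$, which is bounded by $M$, so the expected number in such a ball is $\le n_k\cdot M\cdot \nu(B(v,ch_{n_k})\cap\mathcal{M})\le C' n_k h_{n_k}^{\ell}$ using the standard volume estimate for a $\mathcal{C}^2$ submanifold (this is exactly the kind of geometric-measure bound used in \cite{chola21}). A VC / covering argument over a net of radius $h_{n_k}$ in $\mathcal{M}$, together with $n_k\ge\gamma_n\to\infty$ and a Bernstein/Hoeffding bound, upgrades the expectation to an almost-sure uniform bound on the count, valid simultaneously for all $k$ (there are only $T=O(w_n^{-d})$ cells, a polynomial factor absorbed by the exponential tail). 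Since the Gaussian tail decays, the points outside $B(v,c\log(1/h_{n_k})^{1/2}h_{n_k})$ contribute negligibly; summing the dyadic shells gives $\hat p(v|A_k)\le \frac{C'' (n_k h_{n_k}^\ell)}{n_k h_{n_k}^\ell}=C''$ uniformly.

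Putting the two regions together yields $\sup_k|\hat p(\cdot|A_k)|_\infty\le \max(M+1,C'')=:C$ a.s. for $n$ large, as claimed. The main obstacle is the boundary region: one cannot invoke Theorem \ref{th1} there (it is stated only on $\mathcal{M}_{\gamma_n}$), and the naive bound $K(0)/h_{n_k}^\ell$ diverges, so the argument genuinely requires the volume-growth estimate for the submanifold plus a uniform-in-$k$, uniform-in-$v$ concentration inequality for the local point counts; the Gaussian tail is what makes the ``only finitely many shells matter'' step work and keeps the kernel's unbounded support from causing trouble.
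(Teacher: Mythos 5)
Your plan is correct in substance, but it packages the argument differently from the paper, and one of its two components is superfluous. The paper's proof makes no interior/boundary decomposition at all: it writes $\hat{p}(y|A_k)$ as mean plus centered fluctuation, shows $\sup_k\sup_y|S_n(y)|\to 0$ a.s. by a covering of $\mathcal{M}$ at scale $h^\eta$, the Lipschitz property of the Gaussian kernel, Bernstein's inequality at the covering centers, a union bound over the $\mathcal{O}(w_n^{-d})$ cells, and Borel--Cantelli; and it bounds the expectation by $\sup_y h^{-\ell}\int_{\mathcal{M}}K(\|z-y\|/h)\,\nu(dz)$ times the (assumed bounded) densities $p_X$ and $p(\cdot|x)$, a bound valid on all of $\mathcal{M}$, boundary included. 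Your ``boundary'' argument --- bounded conditional density, the Ahlfors-type volume bound $\nu(B(v,r)\cap\mathcal{M})\lesssim r^\ell$, Gaussian tails over dyadic shells, and a net-plus-Bernstein concentration for local point counts uniform in $k$ and $v$ --- is essentially a point-count reformulation of exactly this mean-plus-fluctuation bound, and, as you can check, nothing in it is specific to the boundary region: it already yields the claim on all of $\mathcal{M}$. Consequently the interior step, where you invoke Theorem \ref{th1} via Lemma \ref{lemaux0}, is unnecessary, and it is also slightly at odds with the statement: the lemma assumes only H0--H4, whereas Theorem \ref{th1} needs H5, which the paper's proof deliberately avoids. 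Two small caveats: H4 does \emph{not} imply $\sup_x\|p(\cdot|x)\|_\infty<\infty$ (it only controls the density near the level $t_x^\alpha$), so you should justify boundedness as the paper implicitly does, via continuity/Lipschitz regularity on the compact $\mathcal{M}$; and since $h_{n_k}$ is random through $n_k$, you should fix the bandwidth scale using \eqref{nk} and monotonicity (i.e.\ work with $h_{\gamma_n}$, $h_{\kappa_n}$) before applying Bernstein and Borel--Cantelli, as the paper does. With those repairs your route gives the same conclusion with the same key ingredients, just with an extra, avoidable case distinction.
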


\begin{proof}
	Recall that we are assuming $K$ to be a Gaussian kernel. We assume that $n$ is large enough so that \eqref{nk} holds. Following the proof of theorem 1 of \cite{chola21}, we define 
	$$W_j(y)=(1/(n_kh_{n_k}^{\ell}))\I_{\{X_j\in A_k\}}K(|Y_j-y|/h{n_k}),$$
	$V_j(y)=W_j(y)-\mathbb{E}(W_j(y))$, and $S_n(y)=\sum_{j=1}^n V_j(y)$. Let $\eta>\ell+1$ and consider a covering ${B(p_1,h^\eta),\dots,B(p_l,h^\eta)}$ of $\mathcal{M}$. Then,
	\begin{multline}
		\sup_{y} |S_n(y)|\leq \max_{1\leq j\leq l} \sup_{y\in B(p_j,h^\eta)} \frac{1}{n_kh_{n_k}^{\ell}}|S_n(y)-S_n(p_j)|+\\
		\max_{1\leq j\leq l} \frac{1}{n_kh_{n_k}^{\ell}}|S_n(p_j)|=I_1+I_2.
	\end{multline}
	Since $K$ is Lipschitz, we have $I_1\leq ch^{\eta-\ell-1}$ for some constant $c>0$. Applying Bernstein's inequality to $S_n(p_j)$ for fixed $p_j$ and $A_k$, we get
	$$\mathbb{P}\Big(\frac{1}{n_kh_{n_k}^{\ell}}|S_n(p_j)|>\epsilon\Big)\leq c_1\exp(-c_2n_kh_{n_k}^{\ell}\epsilon^2),$$
	where $c_1$ and $c_2$ are positive constants.
	
	Then, if $\kappa_n=\lfloor 3b_2nw_n^d\rfloor$,
	$$\mathbb{P}\Big(\sup_k \sup_y |S_n(y)|>\epsilon\Big)\leq c_3w_n^{-d}\exp(-c_3nw_n^{d}h_{\kappa_n}^{\ell}).$$
	By the Borel--Cantelli lemma, it follows that $\sup_k \sup_y |S_n(y)|\to 0$ almost surely.
	
	Next, we will bound $\sup_y \mathbb{E}(V_j(y))$. First, we bound $1/h_{n_k}^{\ell}\leq 1/h_{\gamma_n}^{\ell}$,
	$$\mathbb{E}(V_j(y))=\mathbb{E}[\mathbb{E}(V_j(y)|X)]\leq \int_{[0,1]^d}\int_\mathcal{M} \frac{1}{h_{\gamma_n}^{\ell}} K(\vert z-y \vert/h_{\kappa_n})p(z|x)\nu(dz)p_X(x)dx.$$
	Since $p_X(x)$ and $p(z|x)$ are bounded for all $z$ and $x$, it is enough to bound from above
	$$\sup_y\int_\mathcal{M} \frac{1}{h_{\gamma_n}^{\ell}} K(\vert z-y \vert /h_{\gamma_n})\nu(dz),$$
	which is bounded because we assumed that $K$ is a Gaussian kernel.
\end{proof}

Lemma 6 of \cite{waser14} proves a slightly modified version of the following lemma.

\begin{lemma} \label{lemaux2} Assume H0 to H5. Let $\gamma_n=\lfloor b_1nw_n^d/2\rfloor$, $h_{\gamma_n}\to 0$, and $\mathcal{M}_{\gamma_n}\subset \mathcal{M}$ be a sequence of closed sets such that $\inf_{x\in \mathcal{M}_{\gamma_n}} \rho(x,\partial \mathcal{M})/h_{\gamma_n}\to\infty$. Then, for any $\lambda>0$, there exists $\xi_{2,\lambda}$ such that, for $n$ large enough,
	\begin{equation}\label{lemauxeq2}
		\mathbb{P}\Bigg(\sup_{x\in \aleph_n}V_n(x)>\xi_{2,\lambda}w_n\Bigg)= \mathcal{O}(n^{-\lambda}),
	\end{equation}
	where $V_n(x)=\sup_{t\geq t_0} |\hat{P}(L_x^l(t)|A_k)-P(L_x^l(t)|x)|$. Here, $\hat{P}(\cdot|A_k)$ is the empirical distribution of $Y|X\in A_k$, $t_0$ is given in H4, and $w_n=(\log(n)/n)^{1/(d+2)}$.
\end{lemma}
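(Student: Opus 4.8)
The plan is to control, uniformly over the random points $x \in \aleph_n$ and uniformly over thresholds $t \geq t_0$, the deviation between the empirical conditional CDF-type quantity $\hat{P}(L_x^l(t)|A_k)$ (built from the $n_k$ sample points in the cell $A_k \ni x$) and the population quantity $P(L_x^l(t)|x)$. I would split this into two pieces by inserting the population cell-conditional probability $P(L_x^l(t)|A_k)$: a \emph{stochastic} term $|\hat{P}(L_x^l(t)|A_k) - P(L_x^l(t)|A_k)|$, which is an empirical-process fluctuation around a binomial mean based on roughly $n_k \asymp n w_n^d$ observations, and a \emph{bias} term $|P(L_x^l(t)|A_k) - P(L_x^l(t)|x)|$, which measures how much the conditional law of $Y$ varies as $X$ ranges over the cube $A_k$ of side $w_n$. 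The target bound $\xi_{2,\lambda} w_n$ should come out of balancing these: the bias is $O(w_n)$ by the Lipschitz hypothesis H3, and the stochastic term will be made $O(w_n)$ with overwhelming probability by a Bernstein/VC argument combined with a union bound over the $T = O(w_n^{-d})$ cells.

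For the bias term, note that $L_x^l(t)$ is the sublevel set of $p(\cdot|x)$ at level $t$; since $P(L^l_x(t)|A_k) = \int p(y|A_k)\,\I_{\{p(y|x)\le t\}}\,d\nu(y)$ and similarly $P(L^l_x(t)|x) = \int p(y|x)\,\I_{\{p(y|x)\le t\}}\,d\nu(y)$, the difference is bounded by $\nu(\mathcal{M}) \|p(\cdot|A_k) - p(\cdot|x)\|_\infty$. Writing $p(y|A_k)$ as a $P_X$-average over $x' \in A_k$ of $p(y|x')$ and using H3 with $\mathrm{diam}(A_k) = w_n\sqrt{d}$, this is $O(w_n)$ uniformly in $x$, $t$, and $k$. (One must also handle the level sets of $p(\cdot|x)$ versus $p(\cdot|x')$, but by conditioning on the level set of $p(\cdot|x)$ throughout, only the integrand densities change, not the region, so H3 alone suffices.)

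For the stochastic term, fix a cell $A_k$. Conditionally on $n_k$, the quantity $\hat{P}(L^l_x(t)|A_k)$ is an empirical average of $n_k$ i.i.d. Bernoulli variables $\I_{\{Y_i \in L^l_x(t)\}}$; as $x$ varies over $A_k$ and $t$ over $[t_0,\infty)$, the relevant class of sets $\{L^l_x(t)\}$ is a nested (in $t$) family, which has polynomial (indeed, for fixed $x$, the VC-type complexity of a nested family is trivially controlled, and the additional dependence on $x\in A_k$ can be discretized). Applying Bernstein's inequality and then a supremum bound over this family gives $\mathbb{P}(\sup_{x\in A_k}V_n^{(k)}(x) > \varepsilon) \le c_1 \exp(-c_2 n_k \varepsilon^2)$ up to polynomial prefactors; plugging $n_k \ge \gamma_n \asymp n w_n^d$ and $\varepsilon = \xi w_n$ yields $\exp(-c\, n w_n^d \cdot w_n^2 \log$-type factors$)$. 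A union bound over the $O(w_n^{-d})$ cells then gives a bound of the form $w_n^{-d} \exp(-c\, n w_n^{d+2})$, and since $n w_n^{d+2} = \log n$ by the choice of $w_n$, this is $O(w_n^{-d} n^{-c\xi^2})$; choosing $\xi = \xi_{2,\lambda}$ large enough (depending on $\lambda$ and $d$) makes this $O(n^{-\lambda})$.

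The main obstacle I anticipate is the complexity control of the random family of sets $\{L^l_x(t) : x \in A_k,\ t \ge t_0\}$ entering the empirical-process supremum: one needs that $\sup$ over this family of a Bernoulli empirical process still concentrates at the binomial rate, which requires either a uniform entropy / VC-dimension bound for this collection of level sets or, more cheaply, a discretization argument exploiting the Lipschitz dependence on $x$ (H3) to reduce to a net of $x$-values of polynomial cardinality and the monotonicity in $t$ to reduce the $t$-supremum to a supremum over a grid. Reconciling this with the fact that $n_k$ is itself random — handled by conditioning on the event \eqref{nk}, on which $\gamma_n \le n_k \le \kappa_n$ with probability one for large $n$ — and making the prefactors in the union bound explicit enough to absorb into $A_\lambda$, is the delicate bookkeeping; everything else is the routine Bernstein-plus-union-bound template already used in the proof of Lemma \ref{cotahatp}.
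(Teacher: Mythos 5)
Your overall skeleton is the same as the paper's (which itself follows Lemma 6 of \cite{waser14}): insert $P(L_x^l(t)|A_k)$ to split into a stochastic term and a bias term, bound the bias by $O(w_n)$ via H3 (your observation that the region $L_x^l(t)$ stays fixed and only the conditioning density changes is exactly the paper's bound $|P(L_x^l(t)|A_k)-P(L_x^l(t)|x)|\leq L\nu(\mathcal{M})\sqrt{d}\,w_n$), and control the stochastic term for a fixed $x$ by VC/Bernstein concentration with $n_k\asymp nw_n^d$ points plus a union bound over the $O(w_n^{-d})$ cells, using $nw_n^{d+2}=\log n$ so that a large enough $\xi_{2,\lambda}$ yields $\mathcal{O}(n^{-\lambda})$. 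That bookkeeping is correct and consistent with the claimed rate $\xi_{2,\lambda}w_n$.

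The genuine gap is exactly the step you defer as ``the main obstacle'': passing from one reference point per cell to all sample points $x\in\aleph_n$. Your proposed fix --- a net in $x$ using H3 plus monotonicity in $t$ --- is not sufficient as stated. H3 only gives $\|p(\cdot|x)-p(\cdot|x')\|_\infty\leq Lw_n\sqrt{d}=:\delta$, hence the sandwich $L_x^l(t-\delta)\subset L_{x'}^l(t)\subset L_x^l(t+\delta)$; to conclude anything at the $w_n$ scale you must then bound the measure (empirical and population) of the band $\{y:|p(y|x)-t|\leq\delta\}$, equivalently of $L_x^l(t)\triangle L_{x'}^l(t)$, and Lipschitzness alone gives no control on this --- it is precisely the $\gamma$-exponent condition H4 that does, which you never invoke at this step. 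The paper handles it via the inequality
\[
|\hat{P}(L^l_{x'}(t)|A_k)-P(L^l_{x'}(t)|x')|\leq \vert\hat{p}(\cdot|A_k)\vert_\infty\,\nu\bigl(L_x^l(t)\triangle L^l_{x'}(t)\bigr)+V_n(x)+|G_x(t)-G_{x'}(t)|,
\]
bounding the last two discrepancies by $c_4w_n^{\gamma}$ and $c_3w_n^{1\wedge\gamma}$ through the H4-based estimates (29)--(30) of \cite{waser14}, and bounding the empirical measure of the symmetric difference by $\sup_k\vert\hat{p}(\cdot|A_k)\vert_\infty\cdot\nu(\cdot)$; on a manifold that sup-norm bound is not free and is the whole point of the new Lemma \ref{cotahatp}, which you cite only as a Bernstein ``template'' rather than for this role. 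So either you must bring in H4 (and, if you bound the empirical measure of the symmetric difference directly, Lemma \ref{cotahatp}) at this comparison step, or spell out a uniform-entropy bound for the doubly indexed class $\{L_x^l(t):x,t\}$, which is not automatic; as written, the claim that the Lipschitz dependence on $x$ suffices is where the argument fails.
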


\begin{proof}
	We will provide a sketch of the proof of this lemma since it follows essentially the same idea used to prove lemma 6 of \cite{waser14}.
	
	Let $x\in A_k$ be fixed. Note that $\{L_x^l(t):t\geq t_0\}$ is a nested class of sets with Vapnik--Chervonenkis dimension 2. Then, for all $B>0$ and $\lambda>0$,
	$$\mathbb{P}\Bigg(\sup_t \Big|\hat{P}(L_x^l(t)|A_k)-P(L_x^l(t)|A_k)\Big|>B \sqrt{w_n}\Bigg)=\mathcal{O}(n^{-\lambda}),$$
	where we used that $w_n=(\log(n)/n)^{1/(d+2)}$.
	On the other hand,
	\begin{multline*}
		|P(L_x^l(t)|A_k)-P(L_x^l(t)|x)|\leq Lw_n\nu(L_x(t))\sqrt{d}\leq Lw_n\nu(L_x(t_0))\sqrt{d}\\
		\leq L\nu(\mathcal{M})\sqrt{d}w_n.
	\end{multline*}
	Let $x'\in A_k$. Then
	$$|\hat{P}(L^l_{x'}|A_k)-P(L^l_{x'}(t)|x')|\leq \vert\hat{p}(\cdot |A_k)\vert_\infty\nu(L_x^l(t)\triangle L^l_{x'}(t))+V_n(x)+|G_x(t)-G_{x'}(t)|$$
	where $G_x(t)=P(L_x^l(t)|x)$.  From (30) in \cite{waser14}, $$|G_x(t)-G_{x'}(t)|\leq c_3w_n^{1\wedge \gamma}.$$
	Here, $c_3$ is a positive constant and $\gamma$ is as in H4. From (29) in \cite{waser14}, $\nu(L_x^l(t)\triangle L^l_{x'}(t))\leq c_4w_n^\gamma$ for some positive constant $c_4$. Lastly,  $\sup_k\vert\hat{p}(\cdot|A_k)\vert_\infty$ is bounded, for $n$ large enough, using Lemma \ref{cotahatp}.
\end{proof}

We recall lemma 8 of \cite{waser14}.

\begin{lemma} \label{lemaux3} Fix $\alpha>0$, $t_0>0$ and $\epsilon>0$. Suppose that $p$ is a density function that satisfies H4, and $\hat{p}$ an estimator such that $\vert\hat{p}-p\vert_\infty<v_1$. Let $\hat{P}$ be a probability measure satisfying $\sup_{t\geq t_0} |\hat{P}(L^l(t))-P(L^l(t))|<v_2$. Define 
	$$\hat{t}^\alpha=\inf \{t\geq 0:\hat{P}(\hat{L}^l(t))\geq \alpha\}.$$
	Assume that $v_1$ and $v_2$ are sufficiently small so that $v_1+c_1^{1/\gamma}v_2^{1/\gamma}\leq t^\alpha-t_0$ and $c_1^{-1/\gamma}v_2^{1/\gamma}\leq \epsilon_0$ where $c_1$ and $\gamma$ are the constants given in H4. Then
	$$|\hat{t}^\alpha-t^\alpha|\leq v_1+c_1^{-1/\gamma}v_2^{1/\gamma}$$
	Moreover, for any $\tilde{t}^\alpha$ such that $|\tilde{t}^\alpha-\hat{t}^\alpha|\leq v_3$, if $2v_1+c_1^{1/\gamma}v_2^{1/\gamma}+v_3\leq \epsilon_0$, then there are constants $\xi_1,\xi_2$ and $\xi_3$ such that $\nu(\hat{L}(\tilde{t}^\alpha)\triangle L(t^\alpha))\leq \xi_1v_1+\xi_2v_2+\xi_3v_3$.	
\end{lemma}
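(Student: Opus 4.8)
The plan is to follow the structure of Lemma 8 in \cite{waser14}, since this is a purely analytic statement about density level sets that does not involve the manifold structure at all (the volume measure $\nu$ plays the role of Lebesgue measure, and the $\gamma$-exponent hypothesis H4 is exactly the ingredient that makes the argument go through). The first step is to control $|\hat t^\alpha - t^\alpha|$. By definition $t^\alpha$ satisfies $P(L^l(t^\alpha)) = \alpha$, and $\hat t^\alpha$ is defined through $\hat P(\hat L^l(\cdot))$. I would insert the sublevel sets of $p$: since $\vert \hat p - p\vert_\infty < v_1$, we have the sandwich $L^l(t - v_1) \subset \hat L^l(t) \subset L^l(t + v_1)$, which converts statements about $\hat L^l$ into statements about $L^l$ at a shifted level. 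Combining this with the hypothesis $\sup_{t\ge t_0}|\hat P(L^l(t)) - P(L^l(t))| < v_2$, one gets $P(L^l(t^\alpha - v_1)) \le \alpha + v_2$ type inequalities; then H4 — which says $P(\{y : |p(y|x) - t^\alpha| < \epsilon\}) \le c_2\epsilon^\gamma$ and $\ge c_1\epsilon^\gamma$ near the contour — lets me translate a perturbation of $v_2$ in the probability of the level set into a perturbation of order $c_1^{-1/\gamma} v_2^{1/\gamma}$ in the level itself. Adding the $v_1$ shift from the uniform density bound gives $|\hat t^\alpha - t^\alpha| \le v_1 + c_1^{-1/\gamma} v_2^{1/\gamma}$. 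The smallness conditions $v_1 + c_1^{1/\gamma} v_2^{1/\gamma} \le t^\alpha - t_0$ and $c_1^{-1/\gamma} v_2^{1/\gamma} \le \epsilon_0$ are precisely what is needed to keep all the levels appearing in the argument inside the range $[t_0, \infty)$ where the empirical-process bound holds and inside the $\epsilon_0$-window where H4 is valid.

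For the second part, take any $\tilde t^\alpha$ with $|\tilde t^\alpha - \hat t^\alpha| \le v_3$; then $|\tilde t^\alpha - t^\alpha| \le v_1 + c_1^{-1/\gamma} v_2^{1/\gamma} + v_3$. I would write the symmetric difference as $\nu(\hat L(\tilde t^\alpha) \triangle L(t^\alpha)) \le \nu(\hat L(\tilde t^\alpha)\triangle L(\tilde t^\alpha)) + \nu(L(\tilde t^\alpha)\triangle L(t^\alpha))$. The first term is controlled by the density sup-norm bound: $\hat L(\tilde t^\alpha) \triangle L(\tilde t^\alpha) \subset \{y : |p(y|x) - \tilde t^\alpha| \le v_1\}$, whose $\nu$-measure is $\le c_1^{-1} v_1$ by the lower $\gamma$-exponent bound in H4 (after dividing out the density, which is bounded below by $t_0$ on the relevant set — this is where $\inf_x t_x^\alpha \ge t_0$ enters). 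The second term is the symmetric difference of two true level sets at nearby levels, contained in $\{y : |p(y|x) - t^\alpha| \le |\tilde t^\alpha - t^\alpha|\}$, again of $\nu$-measure $\lesssim (v_1 + c_1^{-1/\gamma} v_2^{1/\gamma} + v_3)/t_0$ by H4. Collecting terms and renaming constants yields $\nu(\hat L(\tilde t^\alpha) \triangle L(t^\alpha)) \le \xi_1 v_1 + \xi_2 v_2 + \xi_3 v_3$; the extra smallness condition $2v_1 + c_1^{1/\gamma} v_2^{1/\gamma} + v_3 \le \epsilon_0$ guarantees the windows used here stay within the validity range of H4.

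The main obstacle — really the only delicate point — is bookkeeping with the $\gamma$-exponent: converting a perturbation of size $v_2$ at the level of the measure $P$ into a perturbation of size $c_1^{-1/\gamma} v_2^{1/\gamma}$ at the level of the threshold $t$, and doing so consistently in both directions (upper and lower) so that the inequalities chain correctly. One must be careful that H4 controls the measure of the slab $\{|p - t^\alpha| < \epsilon\}$ only for $\epsilon \le \epsilon_0$ and only $\textit{at the contour level}$ $t^\alpha$, so to bound the measure of a slab around a nearby level $\tilde t^\alpha$ one first needs $\tilde t^\alpha$ to be within $\epsilon_0$ of $t^\alpha$ — which is exactly what the smallness hypotheses provide — and then a slightly enlarged slab argument. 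Everything else is a routine triangle-inequality and change-of-variables computation; since this lemma is stated as being identical to Lemma 8 of \cite{waser14}, one may simply cite that proof verbatim, noting only that $\nu$ replaces Lebesgue measure throughout with no change in the argument.
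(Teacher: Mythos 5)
Your approach matches the paper's: the paper gives no proof of this lemma at all, simply recalling it as Lemma 8 of \cite{waser14}, and your sketch reproduces that lemma's standard argument (sandwiching $\hat{L}^l$ between true sublevel sets at shifted levels, converting a $v_2$-perturbation of the measure into a $c_1^{-1/\gamma}v_2^{1/\gamma}$-perturbation of the threshold via H4, then bounding the symmetric difference by slabs around the contour) before correctly observing that the original proof can be cited verbatim with $\nu$ in place of Lebesgue measure. One minor bookkeeping point: the upper bounds on the slab measures come from the $c_2\epsilon^\gamma$ side of H4 divided by the density lower bound near the contour (so they are of order $\epsilon^\gamma$ before linearizing), not from $c_1^{-1}\epsilon$ as written, but this does not affect the structure of the argument.
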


\textit{Proof of Theorem \ref{thmain}}

In what follows we assume that $n$ is sufficiently large to satisfy \eqref{nk}. We will consider a sequence of compact sets $\mathcal{M}_n\subset \mathcal{M}$ such that $\inf_{x\in \mathcal{M}_n}\rho(x,\partial \mathcal{M})=c_n>0$ and $\sup_{x\in\partial \mathcal{M}}\rho(x,\mathcal{M}_n)\leq 2c_n$, where $c_n$ is chosen such that $h_{\gamma_n}/c_{\gamma_n}^2\to 0$.

Throughout the proof we denote by $a$ a generic positive constant.
Write 
$$\nu(\hat{\mathbf{C}}_n(x)\triangle \mathbf{C}_P(x))=\nu(\hat{\mathbf{C}}_n(x)\triangle \mathbf{C}_P(x)\cap \mathcal{M}_{\gamma_n})+\nu\Big(\hat{\mathbf{C}}_n(x)\triangle \mathbf{C}_P(x)\cap (\mathcal{M}\setminus \mathcal{M}_{\gamma_n})\Big)$$
where $\gamma_n$ is as in Lemma \ref{lemaux2} and $\mathcal{M}_{\gamma_n}\subset \mathcal{M}$ is a sequence of closed sets such that $\inf_{x\in \mathcal{M}_{\gamma_n}} \rho(x,\partial \mathcal{M})=c_{\gamma_n}$ satisfies $h_{\gamma_n}/c_{\gamma_n}\to 0$.

We will first prove that  there exists a $a>0$ such that 
\begin{equation}\label{measurebound}
	\nu(\mathcal{M}\setminus \mathcal{M}_{\gamma_n})\leq ac_{\gamma_n}.
\end{equation}
Since $\mathcal{M}$ is $\mathcal{C}^2$, it has positive reach, denoted by $\tau_\mathcal{M}$, as shown in proposition 14 of \cite{thale08}. Let $A_r=\{x\in \mathcal{M}: \rho(x,\partial \mathcal{M})<r\}$. From 
$$\sup_{x\in \partial \mathcal{M}} \rho(x,\mathcal{M}_n)\leq 2c_n,$$ it follows that $\mathcal{M}\setminus \mathcal{M}_{\gamma_n}\subset A_{2c_{\gamma_n}}$.

We write $m_n=2\tau_M\sin(c_{\gamma_n}/\tau_\mathcal{M})$, and from proposition A.1 in \cite{aamari21}, we know that $\Upsilon_{\gamma_n}:=\{B(x,m_n):x\in \partial \mathcal{M}\}$ covers $A_{2\gamma_n}$. Since $\partial \mathcal{M}$ has finite Minkowski content due to its positive reach (see corollary 3 of \cite{ambrosio08}), and $\sin(x)\approx x$ when $x\to 0$, there exists a $a>0$ such that $|\Upsilon_{\gamma_n}|/c_{\gamma_n}^{D-\ell-1}\leq a$ for all sufficiently large $n$, where $|\Upsilon_{\gamma_n}|$ is the $D$-dimensional Lebesgue measure of $\Upsilon_{\gamma_n}$.

Thus, $A_{2c_{\gamma_n}}$ can be covered by at most $ac_{\gamma_n}^{-\ell+1}$ balls of radius $2m_n$ centered at $\partial \mathcal{M}$, and the $\nu$-measure of each of these balls is bounded from above by $Bc_{\gamma_n}^{\ell}$ by corollary 1 of \cite{aaron20}. From this, \ref{measurebound} follows.

Now, to bound $\nu(\hat{\mathbf{C}}_n(x)\triangle \mathbf{C}_P(x)\cap \mathcal{M}_{\gamma_n})$, we follow the approach used in the proof of theorem 1 in \cite{waser14}. We apply Lemma \ref{lemaux3} to the density function $p(y|x)$ and the empirical measure $\hat{P}(\cdot|A_k)$, as well as the estimated density function $\hat{p}(\cdot|A_k)$. Here, we provide a sketch of the main changes made to the proof. We denote by $\hat{L}$ the upper level set of $\hat{p}(\cdot|A_k)$.

Let 
$\{i_1,\dots,i_{n_k}\}=\{i:1\leq i \leq n, X_i\in A_k\}$. From lemma 3 in \cite{waser14}, conditioning on $(i_1,\dots,i_{n_k})$, 
$$\hat{L}\Big\{\hat{p}(X_{(i_\alpha)},Y_{(i_\alpha)}|A_k)\Big\}\subset \hat{\mathbf{C}}_n(x)\subset \hat{L}\Big\{\hat{p}(X_{(i_\alpha)},Y_{(i_\alpha)}|A_k)-(n_kh_{n_k})^{-1}\psi_K\Big\},$$
where $\psi_K=\sup_{x,x'}|K(x)-K(x')|$ with $(X_{(i_\alpha)},Y_{(i_\alpha)})$ is the element of 
$$\{(X_{1_1},Y_{i_1}),\dots,(X_{i_{n_k}},Y_{i_{n_k}})\}$$
such that $\hat{p}(Y_{i_\alpha}|A_k)$ ranks $\lfloor n_k\alpha\rfloor$. Let $\hat{t}^\alpha=\hat{p}(X_{(i_\alpha)},Y_{(i_\alpha)})$. It is easy to check that
$$\hat{t}^\alpha=\inf\{t\geq 0:\hat{P}(L^l(t)|A_k)\geq \alpha\}.$$
Consider the event 
$$E=\Big\{\sup_x R_n(x)\leq  2\frac{h_{\gamma_n}}{c_{\gamma_n}},\ \sup_x V_n(x)\leq \xi_{2,\lambda}w_n\Big\}.$$
From $nw_n^dh_{\gamma_n}^{\ell+3}/\log(\gamma_n)\to \infty$, it follows that $w_n/h_{\gamma_n}\to 0$. Then from Lemmas \ref{lemaux0} and \ref{lemaux2}, $\mathbb{P}(E^c)=\mathcal{O}(n^{-\lambda})$. Let $r_n:= h_{\gamma_n}/c_{\gamma_n}$. Since $w_n/h_{\gamma_n}\to 0$, then the event
$$E_1=\Big\{\sup_x R_n(x)\leq  2r_n,\sup_x V_n(x)\leq \xi_{2,\lambda}r_n\Big\}$$
is such  that for all $n$ large enough, $\mathbb{P}(E_1^c)=\mathcal{O}(n^{-\lambda})$. From Lemma \eqref{lemaux3} with $v_1=2r_n$ and $v_2=\xi_{2,\lambda}r_n$, we obtain that, for $n$ large enough,

\begin{equation}\label{lasteq1} \mathbb{P}\Big(\sup_{x} \nu\Big(\hat{L}(\hat{t}^\alpha)\triangle L_x(t^\alpha)\cap \mathcal{M}_{\gamma_n}\Big)\geq \xi_\lambda r_n \Big)=\mathcal{O}(n^{-\lambda}).
\end{equation}
Let $\tilde{t}^{\alpha}=\hat{t}^\alpha-(\gamma_n h_{\gamma_n})^{-1}\psi_K$ and $v_3=(\gamma_n h_{\gamma_n})^{-1}\psi_K$. From $nw_n^dh_{\gamma_n}^{\ell+3}/\log(\gamma_n)\to \infty$, it follows that $v_3\to 0$. Applying Lemma \ref{lemaux3}, 
we get that
$$\mathbb{P}\Big(\sup_{x} \nu\Big(\hat{L}(\tilde{t}^\alpha)\triangle L_x(t^\alpha)\cap \mathcal{M}_{\gamma_n}\Big)\geq \xi^1_\lambda r_n+\xi^2_\lambda v_3 \Big)=\mathcal{O}(n^{-\lambda}).$$
Since $v_3\leq r_n$ for $n$ large enough, we get that
\begin{equation}\label{lasteq2}
	\mathbb{P}\Big(\sup_{x} \nu\Big(\hat{L}(\tilde{t}^\alpha)\triangle L_x(t^\alpha)\cap \mathcal{M}_{\gamma_n}\Big)\geq 2\xi^1_\lambda r_n \Big)=\mathcal{O}(n^{-\lambda}).
\end{equation}
Lastly, we write
\begin{multline*}
	\nu(\hat{\mathbf{C}}_n(x)\triangle \mathbf{C}_P(x)\cap \mathcal{M}_{\gamma_n})\leq\nu\Big(\hat{L}(\hat{t}^\alpha)\triangle L_x(t^\alpha)\cap \mathcal{M}_{\gamma_n}\Big)+
	\\\nu\Big(\hat{L}(\tilde{t}^\alpha)\triangle L_x(t^\alpha)\cap \mathcal{M}_{\gamma_n}\Big)
\end{multline*}
and then the theorem follows from \eqref{lasteq1} and \eqref{lasteq2}.	\\

\textit{Proof of Theorem \ref{noncompact2}}

To prove \eqref{noncompact}, fix any $\epsilon>0$ and choose a sufficiently large compact smooth submanifold $\mathcal{K} = \mathcal{K}(\epsilon) \subset \mathcal{M}$ fulfilling H0, such that 
$	\mathbb{P}(Y \in \mathcal{K}^c) < \epsilon.$
The existence of such a submanifold follows from the existence of smooth exhaustion functions (see Proposition 2.28 in \cite{lee2013introduction}). Then, by a simple union bound,
\begin{multline*}
	\mathbb{P}\Bigl(Y_{n+1}\in \hat{\mathbf{C}}_n(X_{n+1})\triangle \mathbf{C}_P(X_{n+1})\Bigr)
	\le \\ \mathbb{P}\Bigl(Y_{n+1}\in \Bigl(\hat{\mathbf{C}}_n(X_{n+1})\triangle \mathbf{C}_P(X_{n+1})\Bigr) \cap \mathcal{K}\Bigr) + \epsilon.
\end{multline*}	
Now, fixing $\mathcal{K}$, we take—as in the proof of Theorem \ref{thmain}—a sequence of closed sets $\mathcal{K}_{\gamma_n}\subset \mathcal{K}$ such that 
\[
\inf_{x\in \mathcal{K}_{\gamma_n}} \rho(x,\partial \mathcal{K}) = c_{\gamma_n}\to 0 \quad \text{and} \quad \frac{h_{\gamma_n}}{c_{\gamma_n}} \to 0 \quad \text{ as } n \to \infty.
\]
Then we decompose $\mathbb{P}\Bigl(Y_{n+1}\in \hat{\mathbf{C}}_n(X_{n+1})\triangle \mathbf{C}_P(X_{n+1})\cap \mathcal{K}\Bigr) $ equals
\begin{multline*}
	\mathbb{P}\Bigl(Y_{n+1}\in \bigl(\hat{\mathbf{C}}_n(X_{n+1})\triangle \mathbf{C}_P(X_{n+1})\bigr)\cap \mathcal{K}_{\gamma_n}\Bigr)
	+\\ \mathbb{P}\Bigl(Y_{n+1}\in \bigl(\hat{\mathbf{C}}_n(X_{n+1})\triangle \mathbf{C}_P(X_{n+1})\bigr)\cap \bigl(\mathcal{K}\setminus \mathcal{K}_{\gamma_n}\bigr)\Bigr).
\end{multline*}

Since $\mathcal{K}$ is compact and, for all $x$, $p(y| x)$ is a continuous function of $y$, for $n$ large enough we have
\[
\mathbb{P}\Bigl(Y_{n+1}\in \bigl(\hat{\mathbf{C}}_n(X_{n+1})\triangle \mathbf{C}_P(X_{n+1})\bigr)\cap \bigl(\mathcal{K}\setminus \mathcal{K}_{\gamma_n}\bigr)\Bigr) < \epsilon.
\]
Finally, the proof that
\[
\mathbb{P}\Bigl(Y_{n+1}\in \bigl(\hat{\mathbf{C}}_n(X_{n+1})\triangle \mathbf{C}_P(X_{n+1})\bigr)\cap \mathcal{K}_{\gamma_n}\Bigr) \to 0
\]
follows by using the continuity of $p(y| x)$, that $\mathcal{K}$ is in the hypotheses of Theorem \ref{thmain}, and the fact established in the proof of Theorem \ref{thmain} that 
\[
\nu\Bigl(\hat{\mathbf{C}}_n(x)\triangle \mathbf{C}_P(x)\triangle \mathcal{K}_{\gamma_n}\Bigr) \to 0
\]
as $n\to \infty$.
\section*{Acknowledgment}

Authors are grateful with Tyrus Berry, for his insightful comments on the results obtained in his work with Timothy Sauer. 
. The research of the first and third authors has been partially supported by grant FCE-3-2022-1-172289 from ANII (Uruguay), 22MATH-07 form MATH – AmSud (France-Uruguay) and 22520220100031UD from CSIC (Uruguay).

\end{document}